\documentclass[11pt, a4paper, copyright]{google}

\usepackage[authoryear, sort&compress, round]{natbib}
\usepackage{url}
\usepackage{graphicx}
\usepackage{subcaption}
\usepackage{booktabs}
\usepackage[table]{xcolor}
\usepackage{xspace}
\usepackage{algorithm}
\usepackage{algpseudocode}

\usepackage{listings}
\usepackage{tcolorbox}
\tcbuselibrary{listings,skins,breakable}

\usepackage{microtype}
\usepackage{graphicx}
\usepackage{subcaption}
\usepackage{booktabs} % for professional tables

\usepackage[pdfencoding=auto, psdextra, pagebackref=true]{hyperref}
\hypersetup{
    colorlinks,
    linkcolor={blue!50!black},
    citecolor={blue!50!black},
    urlcolor={blue!80!black}
}

\usepackage{amsmath}
\usepackage{amssymb}
\usepackage{mathtools}
\usepackage{amsthm}

\usepackage[capitalize,noabbrev]{cleveref}

\theoremstyle{plain}
\newtheorem{theorem}{Theorem}[section]
\newtheorem{proposition}[theorem]{Proposition}
\newtheorem{lemma}[theorem]{Lemma}
\newtheorem{corollary}[theorem]{Corollary}
\theoremstyle{definition}
\newtheorem{definition}[theorem]{Definition}
\newtheorem{assumption}[theorem]{Assumption}
\theoremstyle{remark}
\newtheorem{remark}[theorem]{Remark}

\usepackage[textsize=tiny]{todonotes}

\usepackage{enumitem}

\usepackage[toc,page]{appendix}

\usepackage{mathtools}

\usepackage{dsfont}

\usepackage{tcolorbox}

\usepackage{bbm}

\usepackage{footnote}

\usepackage{enumitem}

\usepackage{makecell}

\usepackage{xcolor}

\newcommand{\cov}{\operatorname{cov}}

\title{Why Pass@k Optimization Can Degrade Pass@1: Prompt Interference in LLM Post-training}

\author{
\centering
{\bfseries
Anas Barakat\textsuperscript{1},
Souradip Chakraborty\textsuperscript{2},
Khushbu Pahwa\textsuperscript{*},
Amrit Singh Bedi\textsuperscript{3}
\par}

{\small\normalfont\mdseries
\begin{tabular}{c}
\textsuperscript{1}{Singapore University of Technology and Design} \quad
\textsuperscript{2}{University of Maryland, College Park}\\
\small\normalfont\mdseries\textsuperscript{3}{University of Central Florida}
\end{tabular}
\par}

}
\TitleNote{*Work does not relate to position at Amazon.}

\begin{document}

\begin{abstract}
Pass@k is a widely used performance metric for verifiable large language model tasks, including mathematical reasoning, code generation, and short-answer reasoning. It defines success if any of $k$ independently sampled solutions passes a verifier. This multi-sample inference metric has motivated inference-aware fine-tuning methods that directly optimize pass@$k$. However, prior work reports a recurring trade-off: pass@k improves while pass@1 degrades under such methods. This trade-off is practically important because pass@1 often remains a hard operational constraint due to latency and cost budgets, imperfect verifier coverage, and the need for a reliable single-shot fallback. 
We study the origin of this trade-off and provide a theoretical characterization of when pass@k policy optimization can reduce pass@1 through gradient conflict induced by prompt interference. 
We show that pass@$k$ policy gradients can conflict with pass@1 gradients because pass@$k$ optimization implicitly reweights prompts toward low-success prompts; when these prompts are what we term \emph{negatively interfering}, their upweighting can rotate the pass@k update direction away from the pass@1 direction. 
We illustrate our theoretical findings with large language model experiments on verifiable mathematical reasoning tasks. 
\end{abstract}

\maketitle

\section{Introduction}

\noindent \textbf{Pass@$k$ optimization.} In many verifiable tasks, such as code generation and short-answer math, a system can afford multiple response attempts for the same prompt and check the correctness of each response attempt with an automatic verifier (e.g., unit tests, exact-match) that can check the correctness of generated answers. This naturally leads to \emph{pass@$k$ inference}: sample $k$ independent candidate solutions and declare success if \emph{any} candidate is correct. The corresponding metric, pass@$k$, measures the probability that at least one of $k$ i.i.d.\ samples solves the prompt, and has become a standard performance measure when multiple attempts are available \citep{chen-et-al21eval-llms-code,li-et-al22alphacode,gehring-et-al25rlef}.  This “$k$ attempts with verification” setting suggests that the training objective should align with deployment: if inference uses $k$ attempts, training should directly optimize the same pass@$k$ objective rather than optimizing only the single-sample expected reward. Recent work develops policy-gradient estimators and practical training procedures that optimize pass@$k$, and demonstrates large gains in pass@$k$ performance on reasoning and code benchmarks \citep{tang-et-al25icml,walder-karkhanis25neurips}.

\noindent \textbf{Trade-off: optimizing pass@$k$ can reduce pass@1.}
A key complication is that pass@$k$ optimization does not necessarily improve single-shot pass@1 performance. Empirically, some recent works report objective-dependent trade-offs where pass@$k$ improves while pass@1 drops \citep{tang-et-al25icml,walder-karkhanis25neurips}. For example, \citet{walder-karkhanis25neurips} explicitly highlight a trade-off between pass@1 and pass@$k$ during code-generation fine-tuning (e.g. Tables 1 to 4 in their paper), and their results show that optimizing a pass@$k$ objective can lower pass@1 even as higher-$k$ metrics increase substantially (Table~1). These regressions make pass@$k$ optimization risky in settings where single-shot accuracy is a hard requirement.

\noindent \textbf{Why this degradation matters.}
Pass@1 remains operationally important: many deployments have limited latency or cost budgets, may not have reliable verifiers for every prompt, and often need a strong single-shot fallback even when retries are possible. In these regimes, a training procedure that improves pass@$k$ at the cost of pass@1 can reduce overall system reliability and complicate model selection and deployment.

\begin{figure}[t]
  \centering
  \includegraphics[width=\columnwidth]{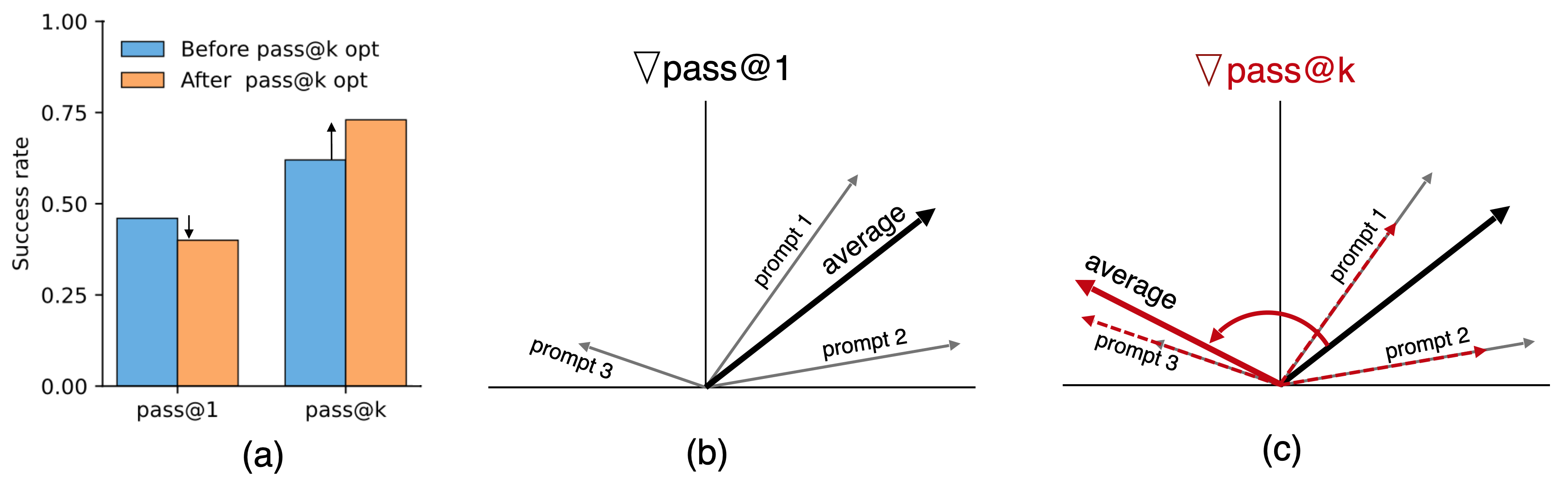}
  \caption{\textbf{(a) Empirical trade-off.} Under pass@k policy optimization, pass@$k$ increases while pass@$1$ may decrease. We explain this empirically observed trade-off in \textbf{(b)} and \textbf{(c)}, which schematically illustrate the pass@$1$ and pass@$k$ ($k>1$) gradients for three prompts and their expectations in policy-parameter space. \textbf{(b) Pass@1 gradients with negatively interfering prompt.} This panel shows a setting in which prompt $3$ is \emph{negatively interfering} with prompts $1$ and $2$, i.e., the per-prompt pass@1 gradient for prompt $3$ has negative inner product with the per-prompt pass@$1$ gradients for prompts $1$ and $2$ (see Section~\ref{sec:prompt-interference}). Here, $\nabla$pass@$1$ denotes the population pass@$1$ gradient, given by the average (expectation over prompts; here under a uniform distribution) of the per-prompt pass@$1$ gradients. \textbf{(c) Pass@$k$ vs pass@1 gradient conflict.} Per-prompt pass@k gradients are scaled versions of the corresponding per-prompt pass@1 gradients (Eq.~\eqref{eq:pass@k-grad}). This reweighting amplifies the magnitude of the pass@$k$ gradient for the negatively interfering prompt $3$, causing the resulting population pass@$k$ gradient to conflict with the population pass@$1$ gradient (their inner product becomes negative, corresponding to an obtuse angle, shown here as approximately $120^\circ$). Consequently, a policy update in the pass@$k$ gradient direction can increase pass@$k$ while decreasing pass@1.}
  \label{fig:main}
\end{figure}

\noindent  \textbf{Open question.}
Despite growing adoption of pass@$k$ objectives, it is still not well understood \emph{why} pass@$k$ optimization can hurt pass@1, and \emph{when} we should expect this trade-off to appear. Without a principled explanation, it is difficult to design reliable inference-aware fine-tuning methods that deliver multi-attempt gains while preserving strong single-shot performance. This leads to our research question: 
\vspace{-2mm}
\begin{quote}
\centering
\emph{When and why can pass@$k$ policy optimization degrade pass@1 performance?}
\end{quote}

\noindent  \textbf{Our key insight.} 
Compared to pass@1, optimizing the pass@$k$ objective induces an implicit \emph{prompt reweighting} toward prompts with lower success probability (i.e., prompts the current policy rarely solves). When these prompts contribute gradients that conflict with the population pass@1 gradient, upweighting them increases their influence on the pass@$k$ policy gradient update. Consequently, the pass@$k$ gradient can conflict with the pass@1 gradient direction. In this regime, pass@$k$ improves under pass@$k$ updates while pass@1 decreases (see Figure~\ref{fig:main} for a schematic explanation). We identify the prompts that can lead to gradient conflict as negatively interfering using the concept of \textit{prompt interference} we introduce in this work. 

\newpage
\noindent  Our main contributions are as follows:

\noindent  $\bullet$ \noindent\textbf{Prompt interference.} We introduce the concept of prompt interference. We say that two given prompts are positively (resp. negatively) interfering if a policy parameter update which increases the probability of providing a correct response for that prompt tends to increase (resp. decrease) the probability of success of the other prompt. To capture the similarity between prompts in terms of pass@1 gradient representation, we introduce a similarity kernel to define the concept of prompt interference mathematically. The induced distribution over negatively interfering prompts is crucial to understand the pass@$k$ and pass@1 trade-off.

\noindent  $\bullet$ \noindent\textbf{Gradient conflict characterization.} We show that pass@$k$ and pass@1 gradients can be conflicting in the sense that they can form an obtuse angle. This implies that a policy update following pass@$k$’s policy gradient tends to increase pass@$k$ while decreasing pass@1. We provide a characterization of this gradient conflict by establishing an interpretable expression for the inner product between pass@$k$ and pass@1.  This characterization crucially uses the fact that per-prompt pass@$k$ gradients are \textit{weighted} pass@1 per-prompt policy gradients. The gradient conflict expression shows that gradient conflict occurs when negatively interfering prompt regions are upweighted by the implicit pass@$k$ reweighting and hence dominate the contribution of positively interfering prompts. 

\noindent  $\bullet$ \noindent\textbf{Sufficient conditions and influence of $k$.} Using our gradient conflict characterization, we provide sufficient conditions under which gradient conflict occurs. We further study the influence of the parameter $k$ and show that increasing $k$ encourages gradient conflict under some conditions in the relative probability of success in negatively versus positively interfering prompts. 

\noindent  $\bullet$ \noindent\textbf{Pass@1 degradation under pass@$k$ updates.} We prove that pass@1 decreases while pass@$k$ increases (simultaneously) under one-step pass@$k$ policy updates satisfying an explicit stepsize condition. To show this result, we establish and use smoothness of the pass@$k$ objective under a standard policy parameter regularity assumption.

\noindent  $\bullet$ \noindent\textbf{Empirical validation.} We illustrate our main findings on large language models for solving math reasoning questions.

\section{Preliminaries}

In this section, we formally define the pass@$k$ objective and recall its policy gradients as used in pass@$k$ policy optimization. 
We consider a setting where a language model is represented as a stochastic policy $\pi_{\theta}$ parameterized by weights $\theta \in \mathbb{R}^d$ over a set of responses~$\mathcal{Y}$ given a set of prompts denoted by~$\mathcal{X}$ (e.g., math questions). The conditional probability distribution of the model’s output responses when prompted with $x$ is $\pi_{\theta}(\cdot|x).$ We denote by $\mathcal{D}$ the distribution over prompts.
 
\noindent\textbf{Pass@$k$ objective.}  
Given a prompt $x$ sampled from a distribution~$\mathcal{D}$, a single `attempt' is a response~$y \sim \pi_{\theta}(\cdot|x).$ 
We suppose we have access to a binary reward verifier $r(x,y) \in \{0,1\}$ which encodes correctness of response~$y$ for prompt $x$. The per-prompt success probability for any $x \in \mathcal{X}$ and any policy parameter~$\theta \in \mathbb{R}^d$ will be denoted: 
\begin{equation}
\label{eq:ptheta-def}
p_{\theta}(x) := \mathbb{E}_{y \sim \pi_{\theta}(\cdot|x)}[r(x,y)] = \mathbb{P}_{y \sim \pi_\theta(\cdot \mid x)}(r(x,y)=1)\,,
\end{equation}
where the last identity follows from the fact that rewards are binary ($r(x,y) \in \{0,1\}$). 
The pass@$k$ objective is defined as the probability that at least one response is correct among~$k$ responses, i.e.,  
\begin{equation}
\label{eq:pass@k}
J_k(\theta) 
:= \mathbb{P}\left(\max_{1 \le i \le k} r(x,y_i) = 1\right)
%:= \mathbb{P}\left(\bigcup_{i=1}^k \{r(x,y_i) = 1\}\right)  
= \mathbb{E}_{x \sim \mathcal{D}}[1- (1- p_{\theta}(x))^k]\,,
\tag{pass@k}
\end{equation}
where $x \sim \mathcal{D}, y_i \sim \pi_{\theta}(\cdot|x)$ for $i = 1, \cdots k$ are $k$ independent response samples.  
Hence pass@$k$ can be written using the nonlinear transformation $f_k(p) := 1- (1-p)^k$ as 
$J_k(\theta) = \mathbb{E}_{x \sim \mathcal{D}}[f_k(p_{\theta}(x))]\,.$ 
It immediately follows that pass@$k$ is an increasing function in $k$: 
$\forall m \geq k \geq 1, \forall \theta \in \mathbb{R}^d, J_m(\theta) \geq J_k(\theta)\,.$ 
Moreover, pass@$k$ can be bounded by pass@1 using Jensen's inequality and concavity of $f_k$ w.r.t. its argument: for all $k \geq 1, \theta \in \mathbb{R}^d$, 
$J_1(\theta) \leq J_k(\theta) \leq \min\left(1, f_k(J_1(\theta))\right)\,.$
Note that under our notation $J_1(\theta) = \mathbb{E}_{x \sim \mathcal{D}}[p_{\theta}(x)].$

\noindent\textbf{Pass@$k$ policy gradients.} Pass@$k$ optimization can be performed using pass@$k$ policy gradients. It follows from the chain rule that the gradient of pass@$k$ w.r.t. the policy parameters is given by: 
\begin{equation}
\label{eq:pass@k-grad}
\nabla J_k(\theta) = \mathbb{E}_{x \sim \mathcal{D}}[ w_{k}(p_{\theta}(x)) \nabla p_{\theta}(x)]\,, w_{k}(p) := k (1- p)^{k-1},
\end{equation}
as noted in a few recent works \citep{mahdavi-et-al25pg-pass@k,yu25}. Pass@$k$ gradients reweight prompts by
\begin{equation}
\label{eq:pass@k-weights-formula}
w_{k,\theta}(x) := w_k(p_{\theta}(x)) = k (1- p_{\theta}(x))^{k-1}\,.
\end{equation}
This weighting emphasizes low-probability of success prompts and downweights prompts with high probability of success.  
To compute pass@$k$ gradients in \eqref{eq:pass@k-grad}, note that the gradient~$\nabla p_{\theta}(x)$ of the prompt-wise probability of success  
for any prompt $x \in \mathcal{X}$ and any policy parameter $\theta \in \mathbb{R}^d$ immediately follows from \eqref{eq:ptheta-def} and is given by:
\begin{equation}
\label{eq:pass@1pg}
\nabla p_{\theta}(x) = \mathbb{E}_{y \sim \pi_{\theta}(\cdot|x)}[r(x,y) s_{\theta}(x,y)]\,,
\end{equation}
where $s_{\theta}(x,y) := \nabla \log \pi_{\theta}(y|x)$ denotes the score function. 

\noindent This gradient expression has been recently used in \cite{tang-et-al25icml,walder-karkhanis25neurips} for instance. 
Per-prompt pass@$k$ and pass@1 are positively collinear as $\nabla J_k(x;\theta) := w_{k,\theta}(x) \nabla p_{\theta}(x) = w_{k,\theta}(x) \nabla J_1(x;\theta)$ and $w_{k,\theta}(x)$. However we will crucially see that pass@$k$ and pass@1 as \textit{expectation over prompts} will not necessarily be positively collinear. This discrepancy which was overlooked in the literature highlights the importance of the expectation over prompts and the structure of per-prompt pass@1 gradients. This motivates our next main section in which we introduce the concept of \textit{prompt interference} capturing the potential conflict between the diversity of prompts in the underlying prompt distribution and the shared policy parameters across prompts.   

\section{Prompt Interference}
\label{sec:prompt-interference}

In this section, we introduce the phenomenon of \textit{prompt interference} which will play an important role in explaining later in the paper why pass@1 can degrade under pass@$k$ policy updates. This section is focused on understanding the structure of pass@1 policy gradients over the set of prompts. The role of the pass@$k$ objective and its influence on optimization will be discussed in section~\ref{sec:grad-conflict}. 

\subsection{Prompt similarity measure and interference} 

The pass@1 objective is defined as the expectation of the per-prompt probability of success with respect to the distribution of prompts (see \ref{eq:pass@k} with $k=1$). As a function of policy parameters, this pass@1 objective induces a shared-parameter coupling across prompts: the same policy parameters are shared across the entire distribution of prompts which might be diverse. 
As a consequence, a (shared) policy parameter update can improve the probability of success ($p_{\theta}(x)$) on some prompts while it can degrade it on other prompts. When this negative transfer occurs, we say that there is negative prompt interference. We provide a formal definition of this phenomenon in the rest of this section.  

\noindent To define prompt interference, we investigate the structure of the pass@1 policy gradient, looking at the \textit{per-prompt} pass@1 policy gradients and comparing them across prompts. We define the following natural prompt gradient similarity measure $\kappa_{\theta}: \mathcal{X} \times \mathcal{X} \to \mathbb{R}$ defined for any policy parameter~$\theta \in \mathbb{R}^d$ and any prompts $x, x' \in \mathcal{X}$ by: 
\begin{equation}
\label{eq:kernel-def-1}
\kappa_{\theta}(x,x') := \langle \nabla p_{\theta}(x),  \nabla p_{\theta}(x')\rangle\,.
\end{equation}

\noindent This kernel captures similarities between prompts in terms of their pass@1 gradients which encode their ability to improve the probability of success. Using the policy gradient expressions of the gradients of pass@1 in \eqref{eq:pass@1pg}, we obtain: 
\begin{equation}
\label{eq:kernel-def-2}
\kappa_{\theta}(x,x') =
\mathbb{E}_{\makecell{\scriptstyle y \sim \pi_{\theta}(\cdot |x) 
\scriptstyle{y' \sim \pi_{\theta}(\cdot |x')}}}
\bigl[\delta(x,y,x',y') \langle s_{\theta}(x,y), s_{\theta}(x',y') \rangle\bigr],
\end{equation}
where $\delta(x,y,x',y') := r(x,y)\, r(x',y')$ which is equal to~$1$ if and only if both $y, y'$ are correct completions for $x, x'$ respectively (as rewards are binary). 
The kernel informs on whether improving pass@1 on one prompt tends to also improve pass@1 on another prompt under the shared policy parameters. To see this, consider a policy update $\theta' = \theta + \eta \nabla p_{\theta}(x)$ with a small stepsize~$\eta > 0$ in the direction $\nabla p_{\theta}(x)$ improving the probability of success on $x$. A first-order Taylor expansion of $\theta \mapsto p_{\theta}(x')$ gives: 
\begin{align*}
p_{\theta^+}(x') &\simeq p_{\theta}(x') + \eta \langle \nabla p_{\theta}(x'),  \nabla p_{\theta}(x) \rangle
= p_{\theta}(x') + \eta \kappa_{\theta}(x', x)\,.
\end{align*}
This leads us to the following prompt interference definition. 

%\begin{tcolorbox}[colframe=white!, top=2pt,left=2pt,right=2pt,bottom=2pt] 
\begin{tcolorbox}[colframe=orange!30, colback=orange!5,
  top=2pt,left=2pt,right=2pt,bottom=2pt]
\begin{definition}[Prompt interference]
\label{def:prompt-interference}
 We will say that two prompts~$x$ and~$x'$ are \textit{interfering positively} if $\kappa_{\theta}(x', x) > 0$ where the similarity kernel~$\kappa_{\theta}$ is defined in \eqref{eq:kernel-def-1} (or \eqref{eq:kernel-def-2}): a step that increases the per-prompt pass@1 $p_{\theta}(x)$ at $x$ also tends to increase $p_{\theta}(x')$. Otherwise, if $\kappa_{\theta}(x', x) < 0$, the two prompts will be said to \textit{interfer negatively} and a step that increases $p_{\theta}(x)$ tends to decrease $p_{\theta}(x')$. 
\end{definition}
\end{tcolorbox}

\noindent We remark that our definition is similar to the definition of interference in \citet[Definition~1]{schaul-et-al19ray-interference} in the context of multi-task learning. See also \cite{yu-et-al20gradient-surgery-mtl} who also discuss gradient interference between the gradients of different task loss function gradients in the same multi-task learning context. Recently, in the context of LLM post-training, \cite{qu-et-al26pope} invoked ray interference as introduced in \citet{schaul-et-al19ray-interference} to explain why learning can be inhibited on heterogeneous prompt mixtures and why exploration on hard problems can be hurt.    
We refer the reader to Section~\ref{sec:related-work} for further discussion regarding the concept of interference and prior work. 

\noindent In the next section, we illustrate the prompt interference concept on a toy example. 

\subsection{Toy example}  
\label{sec:toy-example}

\noindent\textbf{Setting.} We consider a minimal contextual-bandit abstraction of verifier-based multi-sample LLM evaluation as a running example. 
A prompt is a context $x \in \mathcal{X}$ drawn from a mixture distribution with a latent difficulty label $\ell(x)\in\{\textnormal{easy},\textnormal{hard}\}$ with a fixed proportion of hard labels: $\mathbb{P}(\ell(x)=\textnormal{hard})=p_{\mathrm{hard}}.$ Each prompt is represented by a 2D feature vector $\psi(x) = [1, s(x)]^{\top} \in \mathbb{R}^2$ with a bias $1$ and a scalar feature $s(x) \in \mathbb{R}.$ 
Conditioned on difficulty, the scalar feature follows overlapping Gaussians whose separation is controlled by a parameter $\delta > 0$: if $\ell(x)=\textnormal{easy}$ then $s(x)\sim\mathcal{N}\!\left(-\delta/2,\,1\right),$ and if $\ell(x)=\textnormal{hard}$ then $s(x)\sim\mathcal{N}\!\left(+\delta/2,\,1\right).$ Smaller $\delta$ yields greater overlap between easy and hard feature distributions. A completion is represented as a discrete action $y \in\{0,1\}.$ Each prompt has a unique correct completion determined by its difficulty: $y^{\star}(x) = 0$ if $\ell(x)=\text{easy}$ and $y^{\star}(x) = 1$ if $\ell(x)=\text{hard}$.
The verifier reward is deterministic binary: $r(x,y)=\mathbf{1}\{y=y^{\star}(x)\}\,.$ 
We use a two-action softmax policy parameterized by a single vector $\theta\in\mathbb{R}^2$ such that $\pi_\theta(1| x)=\sigma(\theta^\top \psi(x))$ and $\pi_\theta(0|x)=1-\sigma(\theta^\top \psi(x))$, where
$\sigma(t)=\frac{1}{1+\exp(-t)}$ is the sigmoid function. The policy shares parameters across heterogeneous subpopulations (easy vs hard). For later use, we also note that we initialize the policy at a reference policy $\theta_{\mathrm{ref}}$ that is globally biased toward action $y = 0$, inducing high initial success on easy prompts and low initial success on hard prompts. 

\noindent\textbf{Illustration of negative prompt interference.} 
It follows that the per-prompt success probability defined in \eqref{eq:ptheta-def} is given by $p_\theta(x) = \pi_\theta(y^{\star}(x)|x)$ for any prompt $x \in \mathcal{X}.$ A simple computation yields: 
\begin{equation}
\label{eq:grad-pass@1-toy}
\nabla p_{\theta}(x)= \begin{cases}
- z_{\theta}(x)\,\psi(x), &\text{if } \ell(x)=\text{easy},\\
+\,z_{\theta}(x)\,\psi(x), &\text{if } \ell(x)=\text{hard}\,,
\end{cases}
\end{equation}
where $z_{\theta}(x) := \sigma(\theta^\top \psi(x)) (1- \sigma(\theta^\top \psi(x))) \geq 0\,.$  As a consequence, two prompts that have a similar representation will have opposite per-prompt pass@1 gradients and will hence be negatively interfering. This will be the case in our example for prompts in the overlap between the two Gaussians: two such prompts can have different labels (\text{hard} and \text{easy}) while having a similar prompt representation (with a similar scalar feature $s(x)$). See Figure~\ref{fig:cosine-kernel-heatmap} below for an illustration. 

\begin{figure}[h]
  \centering
  \includegraphics[width=0.6\columnwidth]{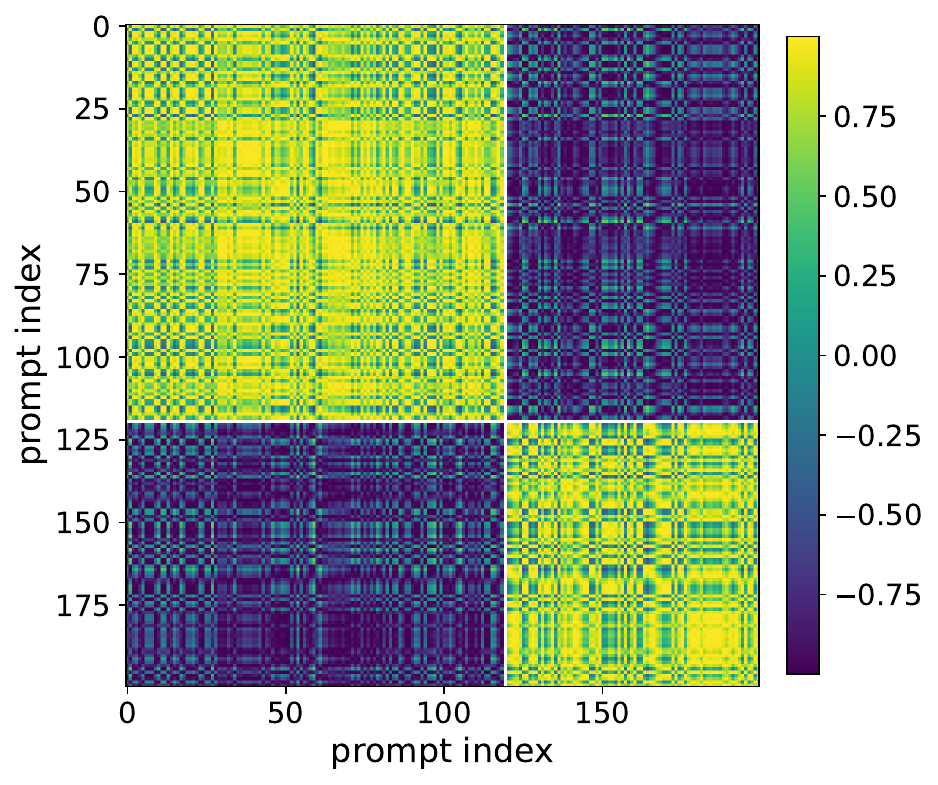}
  \caption{Cosine kernel heatmap: $\cos(\nabla p_{\theta}(x), \nabla p_{\theta}(x'))$ for subsamples of prompts: 120 easy and 80 hard among a total of 6000 samples. Blue regions correspond to negative prompt interference.}
  \label{fig:cosine-kernel-heatmap}
\end{figure}

\noindent In the next section, we show in our toy example how negative prompt interference can lead to pass@1 decrease under one-step pass@$k$ policy gradient ascent. 

\subsection{Tradeoff: pass@$k$ ascent can decrease pass@1}

Using the toy example introduced in the previous section, we show in this section that pass@1 can decrease under one-step pass@$k$ policy gradient ascent.  

\noindent Fix $\delta=0.2$ and consider two realized prompts in the overlap region between the 2 Gaussians:
a prompt $x_e$ with $\ell(x_e)=\text{easy}$ and $\psi(x_e)=[1,-0.1]^\top$, and $x_h$ with $\ell(x_h)=\text{hard}$ and $\psi(x_h)=[1,+0.1]^\top$. These two prompts
have nearly identical representations as $\cos(\psi(x_e),\psi(x_h))\simeq 0.98$ is close to 1. Because of \eqref{eq:grad-pass@1-toy}, they are therefore negative interfering as $\kappa_{\theta}(x_e, x_h) = \langle \nabla p_\theta(x_e), \nabla p_\theta(x_h) \rangle = -z_\theta(x_e)z_\theta(x_h)\langle \psi(x_e),\psi(x_h)\rangle<0$ since $\langle \psi(x_e),\psi(x_h)\rangle>0.$ Numerically we have $\kappa_{\theta}(x_e, x_h) \simeq - 0.01$ and $\cos( \nabla p_\theta(x_e), \nabla p_\theta(x_h)) \simeq - 0.98$ which means that the per-prompt pass@1 gradients are almost antiparallel. On this two-point empirical distribution, the pass@1 and pass@$k$ gradients are given by: 
\begin{align*}
\nabla J_1(\theta) &= \frac{1}{2}(\nabla p_\theta(x_e) + \nabla p_\theta(x_h))\,,\\
\nabla J_{10}(\theta) &= \frac{1}{2}(w_e \nabla p_\theta(x_e) + w_h \nabla p_\theta(x_h))\,,
\end{align*}
where the pass@$10$ weight $w_e := w_{10,\theta}(x_e) \simeq 2.33 \times10^{-7}$ whereas $w_h := w_{10,\theta}(x_h) \simeq 3.88.$ Pass@$10$ upweights the hard prompt ($p_{\theta}(x_h) \simeq 0.10$ low) and aggressively downweights the easy prompt $x_e$ (large $p_{\theta}(x_h) \simeq 0.86$). Indeed, recall that $w_{k,\theta}(x) = k (1- p_{\theta}(x))^{k-1}$ which is near $k$ when $p_{\theta}(x) \approx 0$ but and near $0$ when $p_{\theta}(x) \approx 1\,.$ 
As a consequence, the pass@$10$ gradient is therefore dominated by $\nabla p_{\theta}(x_h)$ which conflicts with $\nabla p_{\theta}(x_e)$. Importantly, we have $\cos(\nabla J_1(\theta), \nabla J_{10}(\theta)) = -0.77 < 0$ meaning that the pass@1 and pass@$10$ are conflicting and strongly anti-aligned. With one gradient-ascent step on pass@$10$ $\theta^+=\theta+\eta \nabla J_{10}(\theta)$ with $\eta=5$, we have: 
\begin{align}
J_1(\theta)  &\simeq 0.48 \;\to\; J_1(\theta^+) \simeq 0.46 \quad (\downarrow),\qquad \nonumber\\
J_{10}(\theta) &\simeq 0.83 \;\to\; J_{10}(\theta^+)=0.95 \quad (\uparrow). \nonumber
\end{align}
This shows that pass@1 decreases while pass@$k$ increases under pass@$k$ optimization. 
In our toy example, we have observed that negative prompt interference can induce a conflict between pass@1 and pass@$k$ gradients when it is amplified by the induced pass@$k$ prompt reweighting. In the next section, we go beyond our toy example to characterize this conflict between pass@$k$ and pass@1 gradients in the general setting.

\section{Pass@$k$ and Pass@1 Gradients Conflict}
\label{sec:grad-conflict}

In this section, we exploit the observation that pass@$k$ policy optimization induces an implicit prompt reweighting, biased toward upweighting low-success prompts. To connect with the previous section, if this reweighting coincides with upweighting negatively interfering prompts, then pass@$k$ policy updates may result in conflicting pass@$k$ and pass@1 gradients, leading to pass@1 degradation under pass@$k$ policy optimization updates. We make this chain of arguments precise in this section. 

\subsection{Gradient conflict: Reweighting meets interference}

To explain why optimizing pass@$k$ can hurt pass@1, we study a key quantity: the angle between the policy gradients of pass@$k$ and pass@1 respectively. We show that this angle can be obtuse and we refer to this setting as a case of \textit{conflicting gradients}.  

\noindent This observation might seem a little surprising from the theory viewpoint at first glance. Indeed, 
the per-prompt gradient $\nabla J_k(x;\theta)$ of pass@$k$ is always a positive multiple of the per-prompt policy gradient $\nabla J_1(x;\theta)$ of pass@1 since $\nabla J_k(x;\theta) = w_{k,\theta}(x) \nabla J_1(x;\theta)$ and $w_{k,\theta}(x) \geq 0$ for any prompt $x \in \mathcal{X}\,.$  Nevertheless, the population gradients $\nabla J_k(\theta) = \mathbb{E}[\nabla J_k(x;\theta)]$ and $\nabla J_1(\theta) = \mathbb{E}[\nabla J_1(x;\theta)]$ need not form an acute angle as their inner product can be negative as we will see.

\noindent The following proposition provides an explicit expression of this inner product to characterize the gradients conflict. To state this result, we introduce the prompt-wise gradient agreement score for pass@1 defined for any policy parameter~$\theta \in \mathbb{R}^d$ and any prompt~$x \in \mathcal{X}$ as follows: 
\begin{equation}
\label{eq:alignment-score}
a_{\theta}(x) := \langle \nabla J_1(x;\theta), \nabla J_1(\theta) \rangle = \mathbb{E}_{x' \sim \mathcal{D}}[\kappa_{\theta}(x,x')]\,.
\end{equation}
This agreement score quantifies the prompt-level gradient interference with the average pass@1 objective. If $a_{\theta}(x) > 0$, that prompt's pass@1 gradient points in a direction that also improves average pass@1. If $a_{\theta}(x) < 0$, it is anti-aligned and improving that prompt's pass@1 tends to decrease population pass@1 under the shared policy parameters~$\theta\,.$
Using this agreement score, our characterization is as follows. 

\begin{tcolorbox}[colframe=white!, top=2pt,left=2pt,right=2pt,bottom=2pt] 
\begin{proposition}[Gradients conflict characterization]
\label{prop:grad-conflict}
For any $k \geq 1$ and any $\theta \in \mathbb{R}^d$, 
\begin{align}
\langle \nabla J_k(\theta), \nabla J_1(\theta) \rangle 
&= \mathbb{E}_{x \sim \mathcal{D}}[w_{k,\theta}(x) a_{\theta}(x)] \label{eq:conflict-1}\\
&= \mathbb{E}[w_{k,\theta}(x)] \cdot \|\nabla J_1(\theta)\|^2 + \cov(w_{k,\theta}(x), a_{\theta}(x))  \label{eq:conflict-2}\,.
\end{align}
In particular, $\langle \nabla J_k(\theta), \nabla J_1(\theta) \rangle < 0$ is equivalent to each one of the following conditions: 
\begin{enumerate}[label=(\roman*), leftmargin=0pt, itemindent=*, align=left]
\item \label{distrib-shift} $\mathbb{E}_{x \sim \mathcal{D}}[w_{k,\theta}(x) a_{\theta}(x)] < 0\,,$
\item \label{covariance} $\cov(w_{k,\theta}(x), a_{\theta}(x)) < - \mathbb{E}[w_{k,\theta}(x)] \cdot \|\nabla J_1(\theta)\|^2\,.$
\end{enumerate}
\end{proposition}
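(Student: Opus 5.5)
The plan is a direct computation from the gradient formula \eqref{eq:pass@k-grad}, followed by linearity of expectation and the definition of covariance.

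\textbf{Identity \eqref{eq:conflict-1}.} By \eqref{eq:pass@k-grad}, $\nabla J_k(\theta) = \mathbb{E}_{x\sim\mathcal{D}}[w_{k,\theta}(x)\nabla p_\theta(x)]$. Since $\nabla J_1(\theta)$ is a fixed vector not depending on $x$, the inner product $\langle \cdot, \nabla J_1(\theta)\rangle$ is a linear functional and can be moved inside the expectation:
\[
\langle \nabla J_k(\theta), \nabla J_1(\theta)\rangle = \mathbb{E}_{x\sim\mathcal{D}}\bigl[w_{k,\theta}(x)\langle \nabla p_\theta(x), \nabla J_1(\theta)\rangle\bigr] = \mathbb{E}_{x\sim\mathcal{D}}[w_{k,\theta}(x)a_\theta(x)],
\]
using $\nabla J_1(x;\theta)=\nabla p_\theta(x)$ and the definition \eqref{eq:alignment-score}. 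The second expression for $a_\theta$ in \eqref{eq:alignment-score}, namely $\mathbb{E}_{x'}[\kappa_\theta(x,x')]$, follows the same way: write $\nabla J_1(\theta)=\mathbb{E}_{x'}[\nabla p_\theta(x')]$ and pull the inner product inside.

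\textbf{Identity \eqref{eq:conflict-2}.} By the definition of covariance,
\[
\mathbb{E}[w_{k,\theta}(x)a_\theta(x)] = \mathbb{E}[w_{k,\theta}(x)]\,\mathbb{E}[a_\theta(x)] + \cov(w_{k,\theta}(x), a_\theta(x)).
\]
It remains to compute $\mathbb{E}[a_\theta(x)] = \langle \mathbb{E}[\nabla p_\theta(x)], \nabla J_1(\theta)\rangle = \|\nabla J_1(\theta)\|^2$, again by linearity. Note that for $k=1$ we have $w\equiv 1$, the covariance vanishes, and the result is consistent.

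\textbf{Equivalences (i) and (ii).} Condition (i) is \eqref{eq:conflict-1} restated verbatim. Condition (ii) is obtained by rearranging \eqref{eq:conflict-2}.

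\textbf{Main obstacle: integrability.} The only nontrivial step is justifying the interchange of the inner product and the expectation, and the existence of the covariance. Two facts handle this:
\begin{itemize}
\item $w_{k,\theta}(x)\in[0,k]$ is bounded, since $p_\theta(x)\in[0,1]$.
\item It is enough to assume $\mathbb{E}\|\nabla p_\theta(x)\|<\infty$. This is already implicit in the definitions of $\nabla J_1$ and $\nabla J_k$, and it is guaranteed, for example, by bounded score functions.
\end{itemize}
Under this assumption, $a_\theta$ is integrable by Cauchy--Schwarz, since $|a_\theta(x)|\le\|\nabla p_\theta(x)\|\,\|\nabla J_1(\theta)\|$. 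Together with the boundedness of $w_{k,\theta}$, this makes all the expectations and the covariance finite. I would state this assumption in a brief remark and otherwise treat the proof as routine.
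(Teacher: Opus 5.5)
Your proposal is correct and follows the paper's proof essentially step by step. You move the inner product inside the expectation in \eqref{eq:pass@k-grad} to get \eqref{eq:conflict-1}, apply the definition of covariance with $\mathbb{E}[a_\theta(x)]=\|\nabla J_1(\theta)\|^2$ to get \eqref{eq:conflict-2}, and read off (i) and (ii); your integrability remark, which uses boundedness of $w_{k,\theta}$ and $\mathbb{E}\|\nabla p_\theta(x)\|<\infty$, is a small rigor point the paper leaves implicit.
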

\end{tcolorbox}

\noindent\textbf{Discussion of conditions.} We discuss each condition and provide two interpretations explaining gradients conflict and we refer the reader to figures~\ref{fig:front-page} and~\ref{fig:agreement-score} for illustrations on the toy example of section~\ref{sec:toy-example}: 

\noindent \textit{\ref{distrib-shift} Prompt distribution shift: pass@$k$ reweights prompts.}  
Recalling that the weights $w_{k,\theta}(x)$ are nonnegative, define the reweighted prompt distribution: 
\begin{equation*}
\tilde{\mathcal{D}}_{k,\theta}(dx) \, \propto w_{k,\theta}(x) \mathcal{D}(dx)\,,  
\end{equation*}
inducing the expectation $\mathbb{E}_{\tilde{\mathcal{D}}_{k,\theta}}[f(x)] = \frac{\mathbb{E}_{\mathcal{D}}[w_{k,\theta}(x)f(x)]}{\mathbb{E}_{\mathcal{D}}[w_{k,\theta}(x)]}$ for any measurable function $f$ of the prompt random variable~$x$.\footnote{Note that the denominator is nonzero as the weights are not equal to zero a.e.} Note that the distribution $\tilde{\mathcal{D}}_{k,\theta}$ places higher mass on prompts with smaller success probability $p_{\theta}(x)$ (hard prompts) since $w_{k,\theta}(x)$ is decreasing in $p_{\theta}(x).$ Therefore, we can rewrite \eqref{eq:conflict-1} as follows: 
\begin{equation}
\langle \nabla J_k(\theta), \nabla J_1(\theta) \rangle 
= \mathbb{E}_{\mathcal{D}}[w_{k,\theta}(x)]\cdot\mathbb{E}_{\tilde{\mathcal{D}}_{k, \theta}}[a_{\theta}(x)]\,,
\end{equation}
and since $\mathbb{E}_{\mathcal{D}}[w_{k,\theta}(x)] > 0$ (as $w_{k,\theta}(x) \geq 0$ and not equal to zero a.e.), we obtain the following equivalence between $\langle \nabla J_k(\theta), \nabla J_1(\theta) \rangle < 0$ and $\mathbb{E}_{\tilde{\mathcal{D}}_{k, \theta}}[a_{\theta}(x)] < 0\,.$

\noindent This characterization shows that conflict of gradients of pass@1 and pass@$k$ occurs if and only if the average of the agreement score $a_{\theta}(x)$ under the pass@$k$ induced prompt distribution $\tilde{\mathcal{D}}_{k, \theta}$ is negative. In contrast note that 
$\mathbb{E}_{\mathcal{D}}[a_{\theta}(x)] = \|\nabla J_1(\theta)\|^2 \geq 0\,.$ Pass@$k$ changes the prompt weighting from $\mathcal{D}$ to $\tilde{\mathcal{D}}_{k, \theta}$ and the average agreement under this shifted distribution can become negative. 

\noindent In summary, we obtain the following causal chain: pass@$k$ induces a prompt reweighting toward harder prompts (i.e. lower $p_{\theta}(x)$ since $w_{k,\theta}(x)$ decreases in $p_{\theta}(x)$). If hard prompts have negative agreement score $a_{\theta}(x)$ (i.e. per-prompt pass@1 gradients are conflicting with the pass@1 gradient) then the reweighted average agreement becomes negative and the gradient of pass@$k$ and pass@1 form an obtuse angle as a consequence, i.e. they are conflicting.   

\begin{figure}[!t]
  \centering
  \includegraphics[width=0.7\columnwidth]{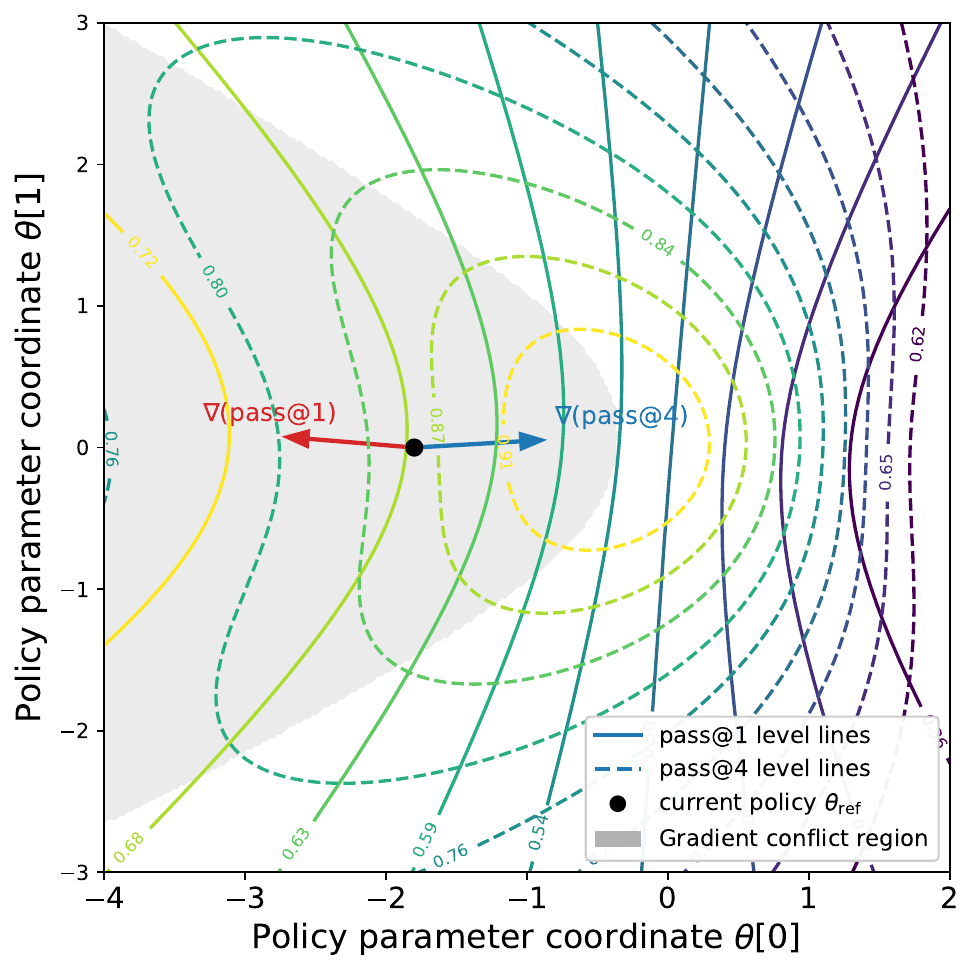}
  \caption{
    Contour plots of pass@1 and pass@$k$ objectives in the policy parameter space. 
    Gradients of pass@$k$ and pass@1 with respect to policy parameters are conflicting in the gray area. 
  }
  \label{fig:front-page}
\end{figure}

\begin{figure}[!h]
  \centering
  \includegraphics[width=\columnwidth]{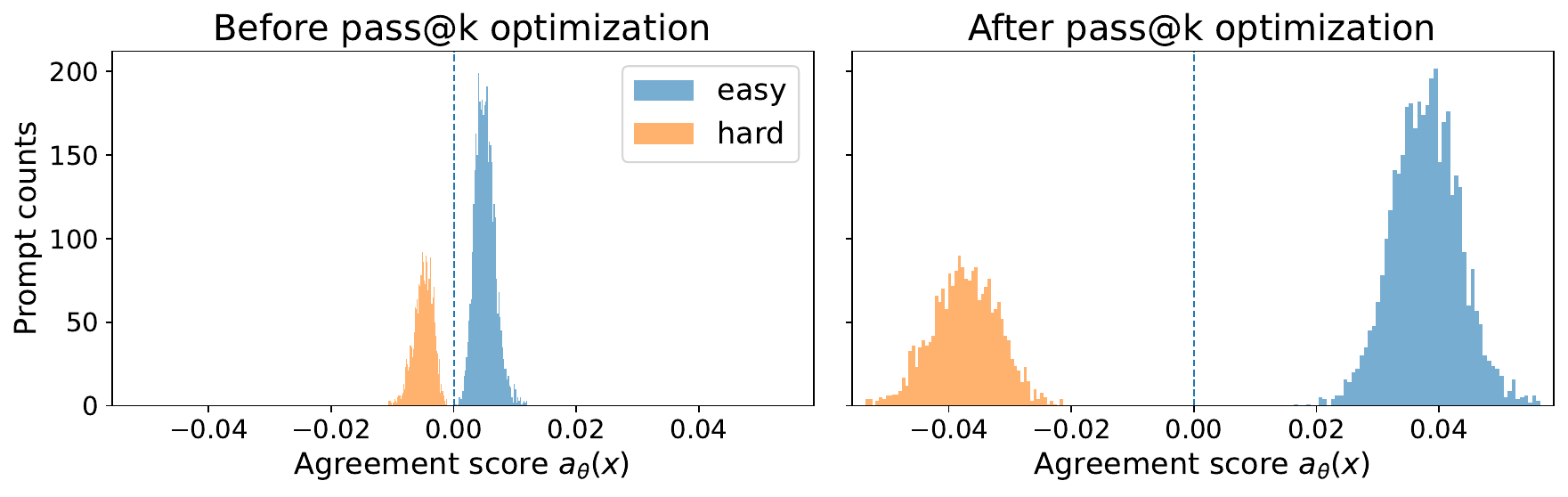}
  \caption{Agreement score~$a_{\theta}(x)$ (see \eqref{eq:alignment-score}) on the toy example. The figure shows that the prompt distribution has both prompts with negative and positive agreement scores. Prompt with negative agreement scores are responsible for gradient conflict and hence pass@1 decrease. After pass@$k$ policy optimization, negative agreement scores get even more negative.}
  \label{fig:agreement-score}
\end{figure}

\noindent  \textit{\ref{covariance} Covariance between weights and agreement scores.}  The covariance~$\cov(w_{k,\theta}(x), a_{\theta}(x))$ captures the coupling between `how much pass@$k$ emphasizes a prompt' (via weight $w_{k,\theta}(x)$) and `whether that prompt helps pass@1' (via alignment score~$a_{\theta}(x)$). The covariance term quantifies whether pass@$k$’s hardness reweighting concentrates gradient mass on prompts whose success-probability gradients are anti-aligned with the pass@1 gradient. Conflict happens when this anti-aligned hard-prompt mass overwhelms the positive term. See Appendix~\ref{appx:covar-condition} for more details.

\begin{remark}
For a more general discussion of the conflict between pass@$k$ and pass@$m$ gradients for any $k, m \geq 1$ using our kernel viewpoint, see Appendix~\ref{sec:kernel-grad-conflict-k-m}. 
\end{remark}

\subsection{Sufficient conditions and threshold in $k$}

We now provide sufficient conditions for pass@$k$ and pass@1 gradients conflict, as a corollary of Proposition~\ref{prop:grad-conflict}. 
Before stating this result, we make a standard smoothness assumption on the policy parameterization. 
\begin{assumption}[Lipschitz and smooth policy]
\label{as:E-LS}
There exist positive constants $G, F$ s.t. for all prompts $x \in \mathcal{X},$ we have the expected score bounds: 
\begin{equation}
\mathbb{E}_{y \sim \pi_{\theta}(\cdot|x)}[\|\nabla \log \pi_{\theta}(y|x)\|^2] \leq G^2\,, \quad
\mathbb{E}_{y \sim \pi_{\theta}(\cdot|x)}[\|\nabla^2 \log \pi_{\theta}(y|x)\|] \leq F\,.
\end{equation}
\end{assumption}

\noindent This assumption is satisfied for instance by Gaussian policies with bounded state feature vectors and softmax tabular policies, see e.g. Lemma 4.4, 4.8 and corollary 4.3 in \citet{yuan-et-al22vanilla-pg}. This is a standard assumption in several works on the convergence analysis of policy gradient methods, see, e.g., \citet{yuan-et-al22vanilla-pg,papini-et-al22smoothing}.

\noindent To quantify the magnitude of negative interference and its prevalence under the distribution~$\mathcal{D}$, we define the $m$-strongly negatively interfering prompt set with margin~$m>0$:    
\begin{equation}
\mathcal{X}_{-}(\theta,m) := \{x \in \mathcal{X}: a_{\theta}(x) \leq - m\}\,,
\end{equation}
and denote by $q_{\theta} := \mathcal{D}(\mathcal{X}_{-}(\theta,m)) \in [0,1]$ its probability mass under the distribution~$\mathcal{D}$ of prompts.   
Then, the expected pass@$k$ weight contributions inside and outside the $m$-strongly interfering set~$\mathcal{X}_{-}(\theta, m)$ are given by: 
\begin{align}
\label{eq:aggregated-weights}
W_{-}(k,\theta) &:= \mathbb{E}[w_{k,\theta}(x) \mathds{1}_{\{x \in \mathcal{X}_{-}(\theta, m)\}}]\,,\\
W_{+}(k,\theta) &:= \mathbb{E}[w_{k,\theta}(x) \mathds{1}_{\{x \notin \mathcal{X}_{-}(\theta, m)\}}]\,. 
\end{align}
The following corollary of Proposition~\ref{prop:grad-conflict} shows that gradients of pass@1 and pass@$k$ conflict when the contribution of pass@$k$ weights in negatively interfering prompt regions dominates its counterpart for positively interfering prompts.  
\begin{tcolorbox}[colframe=white!, top=2pt,left=2pt,right=2pt,bottom=2pt] 
\begin{corollary}[Dominating negatively interfering prompts]
\label{cor:dominating-neg-interfer-prompts}
Let Assumption~\ref{as:E-LS} hold. Suppose that $q_{\theta} > 0$ and let $k \geq 2.$ Then for any $\theta \in \mathbb{R}^d,$
$\langle \nabla J_k(\theta), \nabla J_1(\theta)\rangle \leq - \delta(\theta)$ where $\delta(\theta) := m W_{-}(k,\theta) - G^2 W_{+}(k,\theta)$. If in addition $\delta(\theta) < 0$, then $\langle \nabla J_k(\theta), \nabla J_1(\theta)\rangle \leq - \delta(\theta) < 0\,.$ 
\end{corollary}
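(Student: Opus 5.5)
The plan is to start from the exact identity \eqref{eq:conflict-1} of Proposition~\ref{prop:grad-conflict}, which says $\langle \nabla J_k(\theta), \nabla J_1(\theta)\rangle = \mathbb{E}_{x\sim\mathcal{D}}[w_{k,\theta}(x)a_\theta(x)]$. I would then split this expectation with the indicator of the $m$-strongly negatively interfering set $\mathcal{X}_{-}(\theta,m)$ and its complement:
\begin{align*}
\mathbb{E}[w_{k,\theta}(x)a_\theta(x)] = \mathbb{E}\bigl[w_{k,\theta}(x)a_\theta(x)\mathds{1}_{\{x\in\mathcal{X}_-(\theta,m)\}}\bigr] + \mathbb{E}\bigl[w_{k,\theta}(x)a_\theta(x)\mathds{1}_{\{x\notin\mathcal{X}_-(\theta,m)\}}\bigr].
\end{align*}
The argument then bounds the two pieces separately, using only that $w_{k,\theta}(x)=k(1-p_\theta(x))^{k-1}\ge 0$.

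\textbf{Negative region.} On $\mathcal{X}_-(\theta,m)$ we have $a_\theta(x)\le -m$ by definition. Since the weights are nonnegative, the first term is at most $-m\,W_-(k,\theta)$. The hypothesis $q_\theta>0$ is what makes this region, and hence $W_-(k,\theta)$, potentially nontrivial. For $k\ge 2$ the weights are strictly positive wherever $p_\theta(x)<1$.

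\textbf{Complementary region.} Here I only need a uniform upper bound $a_\theta(x)\le G^2$, which gives the second term at most $G^2 W_+(k,\theta)$. I would obtain it from Cauchy--Schwarz, $a_\theta(x)\le \|\nabla p_\theta(x)\|\,\|\nabla J_1(\theta)\|$, together with Assumption~\ref{as:E-LS}.
\begin{itemize}
\item By \eqref{eq:pass@1pg} and $r\in\{0,1\}$, we get $\|\nabla p_\theta(x)\| \le \mathbb{E}_y[\|s_\theta(x,y)\|] \le (\mathbb{E}_y\|s_\theta(x,y)\|^2)^{1/2}\le G$, by Jensen.
\item Then $\|\nabla J_1(\theta)\| = \|\mathbb{E}_x \nabla p_\theta(x)\|\le \mathbb{E}_x\|\nabla p_\theta(x)\|\le G$.
\end{itemize}
The only regularity needed here is the interchange of gradient and expectation over prompts, already implicit in \eqref{eq:pass@k-grad}. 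The second-order bound $F$ in the assumption is not used.

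Adding the two bounds gives $\langle \nabla J_k(\theta),\nabla J_1(\theta)\rangle \le -mW_-(k,\theta)+G^2W_+(k,\theta) = -\delta(\theta)$. The strict conflict conclusion then follows whenever $-\delta(\theta)<0$, i.e.\ when $mW_-(k,\theta)>G^2W_+(k,\theta)$, meaning the weighted mass on strongly negatively interfering prompts dominates. The corollary's ``in addition'' clause should be read with this sign, i.e.\ the condition is $\delta(\theta)>0$, as written it is a typo. The main obstacle is therefore not analytic but this bookkeeping of signs, together with justifying $\|\nabla p_\theta(x)\|\le G$ via Jensen under the expected-score assumption rather than an almost-sure bound.
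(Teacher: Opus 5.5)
Your proposal is correct and follows the paper's proof essentially step for step. You split the identity \eqref{eq:conflict-1} of Proposition~\ref{prop:grad-conflict} over $\mathcal{X}_{-}(\theta,m)$ and its complement, use $a_\theta(x)\le -m$ on the first piece, and bound the second piece via the Cauchy--Schwarz/Jensen estimate $|a_\theta(x)|\le G^2$ from Assumption~\ref{as:E-LS}. Your reading of the final clause as requiring $\delta(\theta)>0$ is also right: the stated ``$\delta(\theta)<0$'' is a sign typo, and the paper's own proof of Proposition~\ref{prop:pass@1-degradation} in fact uses $\delta(\theta)>0$.
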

\end{tcolorbox}

\noindent\textbf{Discussion of assumption.} Note that the assumption $q_{\theta} > 0$ is necessary for gradient conflict. Indeed, if instead we have for instance for almost every $x \in \mathcal{X}, a_{\theta}(x) \geq 0$, then it follows from \eqref{eq:conflict-1} that $\langle \nabla J_k(\theta), \nabla J_1(\theta)\rangle \geq 0$ as the weights $w_{k,\theta}(x)$ are also always nonnegative. Negative agreement score is a necessary condition for gradient conflict.   

\noindent The gradient conflict condition of Corollary~\ref{cor:dominating-neg-interfer-prompts} depends on the relative weights contribution ratio $W_{-}(k,\theta)/W_{+}(k,\theta)$ in negative vs positive prompt agreement score regions. This ratio is a function of both the chosen parameter $k$ and the probability of success~$p_\theta(x)$~\eqref{eq:ptheta-def} under policy~$\pi_{\theta}\,.$ We now exploit the closed form of the pass@$k$ weights~$w_{k,\theta}(x) := k(1-p_{\theta}(x))^{k-1}$ to evaluate the ratio under assumptions on the prompt-wise probabilities of success. The next result shows that beyond a threshold $k^{\star}$, the  pass@$k$ and pass@1 gradients form an obtuse angle. We make a separation assumption on the probability of success over prompts: the probability of success for $m$-strongly interfering prompts is supposed to be uniformly smaller than the probability of success of the remaining prompts. Intuitively, this boils down to assume that negatively interfering prompts are relatively harder to solve than the rest of the prompts.  

\begin{tcolorbox}[colframe=white!, top=2pt,left=2pt,right=2pt,bottom=2pt] 
\begin{proposition}[Influence of $k$]
\label{prop:phase-transition-in-k}
In the setting of corollary~\ref{cor:dominating-neg-interfer-prompts}, assume there exist $\varepsilon, \delta, q> 0$ s.t. 
\begin{equation}
\sup_{x \in \mathcal{X}_{-}(m,\theta)} p_{\theta}(x) \leq \varepsilon < \delta \leq \inf_{x \notin \mathcal{X}_{-}(m,\theta)} p_{\theta}(x), 
\end{equation}
and $q_{\theta} \geq q$ for any $\theta$, then 
$\langle \nabla J_k(\theta), \nabla J_1(\theta)\rangle \leq - \delta$ where $\delta = k [(1-\varepsilon)^{k-1} m q - (1-\delta)^{k-1} G^2 (1-q)].$
If in addition  
$k > k^{\star} := 1 + \frac{\log(\frac{(1-q) G^2}{q m})}{\log(\frac{1-\varepsilon}{1-\delta})}\,,$
then $\langle \nabla J_k(\theta), \nabla J_1(\theta) \rangle < 0\,.$
\end{proposition}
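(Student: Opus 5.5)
The plan is to start from Corollary~\ref{cor:dominating-neg-interfer-prompts} (really from the identity \eqref{eq:conflict-1} of Proposition~\ref{prop:grad-conflict}) and bound $W_{-}(k,\theta)$ from below and $W_{+}(k,\theta)$ from above using the closed form $w_{k,\theta}(x)=k(1-p_\theta(x))^{k-1}$ and the success-probability separation. The sign conventions in the statements need care. The corollary's bound is really
$$\langle \nabla J_k(\theta),\nabla J_1(\theta)\rangle \le -\bigl(mW_- - G^2W_+\bigr),$$
and conflict occurs when $mW_- - G^2W_+>0$. I will establish exactly this form, and read the displayed ``$\delta$'' of the proposition (which clashes with the separation level $\delta$) as the positive margin $k[(1-\varepsilon)^{k-1}mq-(1-\delta)^{k-1}G^2(1-q)]$. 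To avoid the clash, I will call this margin $\Delta_k$ in the proof.

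\textbf{Step 1 (split the expectation).} Write $\mathbb{E}[w_{k,\theta}a_\theta]$ as the sum of its contributions on $\mathcal{X}_-(\theta,m)$ and on its complement. On $\mathcal{X}_-$ we have $a_\theta\le -m$ and $w\ge0$, so that part is at most $-mW_-$. On the complement, Cauchy--Schwarz gives $a_\theta(x)\le\|\nabla p_\theta(x)\|\,\|\nabla J_1(\theta)\|$. Jensen applied to \eqref{eq:pass@1pg} with $r\in\{0,1\}$ gives $\|\nabla p_\theta(x)\|\le(\mathbb{E}\|s_\theta\|^2)^{1/2}\le G$ under Assumption~\ref{as:E-LS}, and also $\|\nabla J_1\|\le G$. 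Hence $a_\theta\le G^2$, so that part is at most $G^2W_+$. This recovers the corollary.

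\textbf{Step 2 (bound the weights).} Since $p\mapsto k(1-p)^{k-1}$ is nonincreasing, $p_\theta\le\varepsilon$ on $\mathcal{X}_-$ gives $w\ge k(1-\varepsilon)^{k-1}$ there, so $W_-\ge k(1-\varepsilon)^{k-1}q_\theta\ge k(1-\varepsilon)^{k-1}q$. On the complement, $p_\theta\ge\delta$ gives $w\le k(1-\delta)^{k-1}$, so $W_+\le k(1-\delta)^{k-1}(1-q_\theta)\le k(1-\delta)^{k-1}(1-q)$. Substituting both bounds yields $\langle\nabla J_k,\nabla J_1\rangle\le -\Delta_k$.

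\textbf{Step 3 (threshold in $k$).} The condition $\Delta_k>0$ is equivalent to $\bigl(\tfrac{1-\varepsilon}{1-\delta}\bigr)^{k-1}>\tfrac{(1-q)G^2}{qm}$. Because $\varepsilon<\delta$, the base exceeds $1$, so its logarithm is positive. Taking logarithms and dividing gives $k-1>\log\!\bigl(\tfrac{(1-q)G^2}{qm}\bigr)/\log\!\bigl(\tfrac{1-\varepsilon}{1-\delta}\bigr)$, which is exactly $k>k^\star$. Then $\langle\nabla J_k,\nabla J_1\rangle\le-\Delta_k<0$.

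The computations are routine. The main obstacle is the bookkeeping in Step 2. Replacing $q_\theta$ by the lower bound $q$ is monotone in the correct direction only in the combination $mW_- - G^2W_+$: lowering $q_\theta$ decreases the first term and raises $1-q_\theta$, and both changes make the lower bound on the margin smaller, so it remains valid. The other point to handle is the degenerate cases $q=1$ or $\delta=1$, where the logarithm in $k^\star$ is undefined. In those cases the claim holds directly, since $W_+$ vanishes or is bounded by zero.
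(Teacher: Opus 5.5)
Your proof is correct and follows the paper's argument essentially step for step. You split $\mathbb{E}[w_{k,\theta}(x)a_\theta(x)]$ over $\mathcal{X}_{-}(\theta,m)$ and its complement, combine $a_\theta\le -m$ and $a_\theta\le G^2$ with the monotone weight bounds $w_{k,\theta}\ge k(1-\varepsilon)^{k-1}$ and $w_{k,\theta}\le k(1-\delta)^{k-1}$, and then take logarithms using $(1-\varepsilon)/(1-\delta)>1$. Your sign reading (conflict when $mW_- - G^2W_+>0$, with the margin renamed to avoid clashing with the separation level $\delta$) is the right one, and your treatment of the degenerate cases $q=1$ and $\delta=1$ covers something the paper omits.
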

\end{tcolorbox}
\noindent This result shows there is a phase transition in $k$: if $k$ is large enough then gradients of pass@$k$ and pass@1 will be conflicting and hence pass@1 will degrade while pass@$k$ will improve under a small step size policy update following pass@$k$'s gradient. The threshold $k^{\star}$ becomes smaller when the probability mass of the negatively interfering region is larger. This means that under larger negative prompt interference, the conflict between pass@1 and pass@$k$ becomes more severe as it occurs for even smaller $k$.  

\subsection{Pass@1 degradation under pass@$k$ updates}

In this section, we show and quantify pass@1 decrease and pass@$k$ increase under pass@$k$ policy optimization, exploiting the conflict between pass@$k$ and pass@1 gradients shown in the previous sections. To quantify improvement and degradation under policy updates, we prove smoothness of the pass@$k$ objective for any $k \geq 1$ under Assumption~\ref{as:E-LS} (see Lemma~\ref{lem:smoothness-pass@k} in appendix~\ref{appx:lemma-smoothness}). Using this smoothness property, we show that gradient conflict results in provable pass@1 decrease under pass@$k$ optimization. 
\begin{tcolorbox}[colframe=white!, top=2pt,left=2pt,right=2pt,bottom=2pt] 
\begin{proposition}[Pass@$1$ decrease under pass@$k$ updates]
\label{prop:pass@1-degradation}
In the setting of corollary~\ref{cor:dominating-neg-interfer-prompts}, set $\theta^+ = \theta + \eta \nabla J_k(\theta)$ where $k \geq 2$ and $\eta > 0$. If $\eta \leq \min(\frac{\delta(\theta)}{C_2}, \frac{1}{L_k})$ where $C_2 := k^2 G^2 L_1/2$ then
\begin{align}
J_1(\theta^+) &< J_1(\theta) - \eta \delta(\theta) + C_2 \eta^2 < J_1(\theta)\,,\\
J_k(\theta^+) &\geq  J_k(\theta) + \frac{\eta}{2} \|\nabla J_k(\theta)\|^2 \geq  J_k(\theta)\,.
\end{align}
\end{proposition}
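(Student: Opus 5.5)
The plan is to use the smoothness of both objectives, via Lemma~\ref{lem:smoothness-pass@k} (pass@$k$ is $L_k$-smooth and pass@1 is $L_1$-smooth under Assumption~\ref{as:E-LS}), together with the conflict bound of Corollary~\ref{cor:dominating-neg-interfer-prompts}. Note a sign convention: for pass@1 to decrease we need $\langle \nabla J_k(\theta),\nabla J_1(\theta)\rangle \le -\delta(\theta)$ with $\delta(\theta)>0$. We therefore read the corollary in the regime $\delta(\theta)>0$, i.e.\ $mW_-(k,\theta) > G^2 W_+(k,\theta)$. This is also the regime in which the stepsize condition $\eta \le \delta(\theta)/C_2$ is meaningful.

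\textbf{Pass@1 decrease.} We apply the descent lemma for the $L_1$-smooth function $J_1$ at the step $\theta^+-\theta=\eta\nabla J_k(\theta)$:
\[
J_1(\theta^+) \le J_1(\theta) + \eta\langle \nabla J_1(\theta),\nabla J_k(\theta)\rangle + \tfrac{L_1\eta^2}{2}\|\nabla J_k(\theta)\|^2 .
\]
The inner product is bounded by $-\delta(\theta)$ using the corollary.

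For the quadratic term we bound $\|\nabla J_k(\theta)\|$ as follows. Since $\nabla J_k(\theta)=\mathbb{E}[w_{k,\theta}(x)\nabla p_\theta(x)]$ and $0\le w_{k,\theta}\le k$, we get $\|\nabla J_k(\theta)\|\le k\,\mathbb{E}\|\nabla p_\theta(x)\|$. Next, by Jensen and Cauchy--Schwarz in \eqref{eq:pass@1pg}, $\|\nabla p_\theta(x)\|\le \mathbb{E}_y[r\,\|s_\theta\|]\le (\mathbb{E}_y\|s_\theta\|^2)^{1/2}\le G$. Hence $\|\nabla J_k(\theta)\|^2\le k^2G^2$, and the quadratic term is at most $C_2\eta^2$ with $C_2=k^2G^2L_1/2$.

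This gives $J_1(\theta^+)\le J_1(\theta)-\eta\delta(\theta)+C_2\eta^2$. Strictness of the final comparison comes from $\eta\le \delta(\theta)/C_2$, which gives $-\eta\delta(\theta)+C_2\eta^2=\eta(C_2\eta-\delta(\theta))\le 0$. To get a strict inequality we either use a strict version of the stepsize condition, or note that the corollary's bound combined with the quadratic estimate is typically strict. I would state it as $\le$ for the middle inequality, with the strict final inequality holding for $\eta<\delta(\theta)/C_2$.

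\textbf{Pass@$k$ increase.} We use the ascent form of the descent lemma for the $L_k$-smooth $J_k$:
\[
J_k(\theta^+)\ge J_k(\theta)+\eta\|\nabla J_k(\theta)\|^2-\tfrac{L_k\eta^2}{2}\|\nabla J_k(\theta)\|^2 = J_k(\theta)+\eta\Bigl(1-\tfrac{L_k\eta}{2}\Bigr)\|\nabla J_k(\theta)\|^2 .
\]
With $\eta\le 1/L_k$ we have $1-L_k\eta/2\ge 1/2$, which yields the claimed bound.

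\textbf{Main obstacle.} The only nontrivial ingredient is the smoothness of $J_k$ and $J_1$, which is provided by Lemma~\ref{lem:smoothness-pass@k}; given it, everything else is routine. The remaining delicate point is sign bookkeeping: making sure the corollary is applied with $\delta(\theta)>0$, and handling the strict versus non-strict inequality at the boundary $\eta=\delta(\theta)/C_2$.
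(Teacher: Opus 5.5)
Your argument is the paper's proof essentially line for line. It uses the descent inequality for the $L_1$-smooth $J_1$ along $\eta\nabla J_k(\theta)$, the conflict bound $\langle\nabla J_1(\theta),\nabla J_k(\theta)\rangle\le-\delta(\theta)$ from Corollary~\ref{cor:dominating-neg-interfer-prompts}, the estimate $\|\nabla J_k(\theta)\|^2\le k^2G^2$, and the ascent inequality for the $L_k$-smooth $J_k$ with $\eta\le 1/L_k$. Your reading of the sign is right: the corollary's ``$\delta(\theta)<0$'' is a typo. No separate assumption is needed, since $\eta>0$ together with $\eta\le\delta(\theta)/C_2$ already forces $\delta(\theta)>0$.

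The one place you fall short of the statement is strictness, where you downgrade the middle inequality to $\le$. Your caution is justified, because the paper's own proof has the same slip. It only derives $J_1(\theta^+)\le J_1(\theta)-\eta\delta(\theta)+C_2\eta^2$, and then asserts $-\eta\delta(\theta)+C_2\eta^2<0$ for all $\eta\le\delta(\theta)/C_2$, which fails at the endpoint.

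You do not need to weaken the result, though: in the corollary's setting the quadratic estimate is always strict, not just ``typically''.
\begin{itemize}
\item On $\mathcal{X}_{-}(\theta,m)$, which has mass $q_\theta>0$, we have $a_\theta(x)\le -m<0$, so $\nabla p_\theta(x)\neq 0$.
\item By \eqref{eq:pass@1pg} this forces $p_\theta(x)>0$: if $p_\theta(x)=0$, then $r(x,\cdot)=0$ $\pi_\theta(\cdot|x)$-a.s. and the gradient vanishes.
\item Hence $w_{k,\theta}(x)<k$ for $k\ge 2$, and $w_{k,\theta}(x)\|\nabla p_\theta(x)\|<k\|\nabla p_\theta(x)\|\le kG$ there.
\item Since $w_{k,\theta}(x)\|\nabla p_\theta(x)\|\le kG$ everywhere, we get $\|\nabla J_k(\theta)\|\le\mathbb{E}[w_{k,\theta}(x)\|\nabla p_\theta(x)\|]<kG$.
\end{itemize}
So the quadratic term $\frac{L_1\eta^2}{2}\|\nabla J_k(\theta)\|^2$ is strictly below $C_2\eta^2$. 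This gives the strict middle inequality $J_1(\theta^+)<J_1(\theta)-\eta\delta(\theta)+C_2\eta^2$, and therefore $J_1(\theta^+)<J_1(\theta)$ on the whole range $0<\eta\le\delta(\theta)/C_2$. Only the second link of the displayed chain becomes an equality at $\eta=\delta(\theta)/C_2$.
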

\end{tcolorbox}

\noindent The stepsize condition guarantees that the stepsize is small enough to ensure both decrease of pass@1 and increase of pass@$k$ using the different smoothness constants of pass@$k$ and pass@1 compared to the gradient conflict strength~$\delta(\theta).$ See Figure~\ref{fig:pass@k-pass@1} for an illustration over multiple iterations. 

\begin{figure}[H]
  \centering
  \includegraphics[width=0.8\columnwidth]{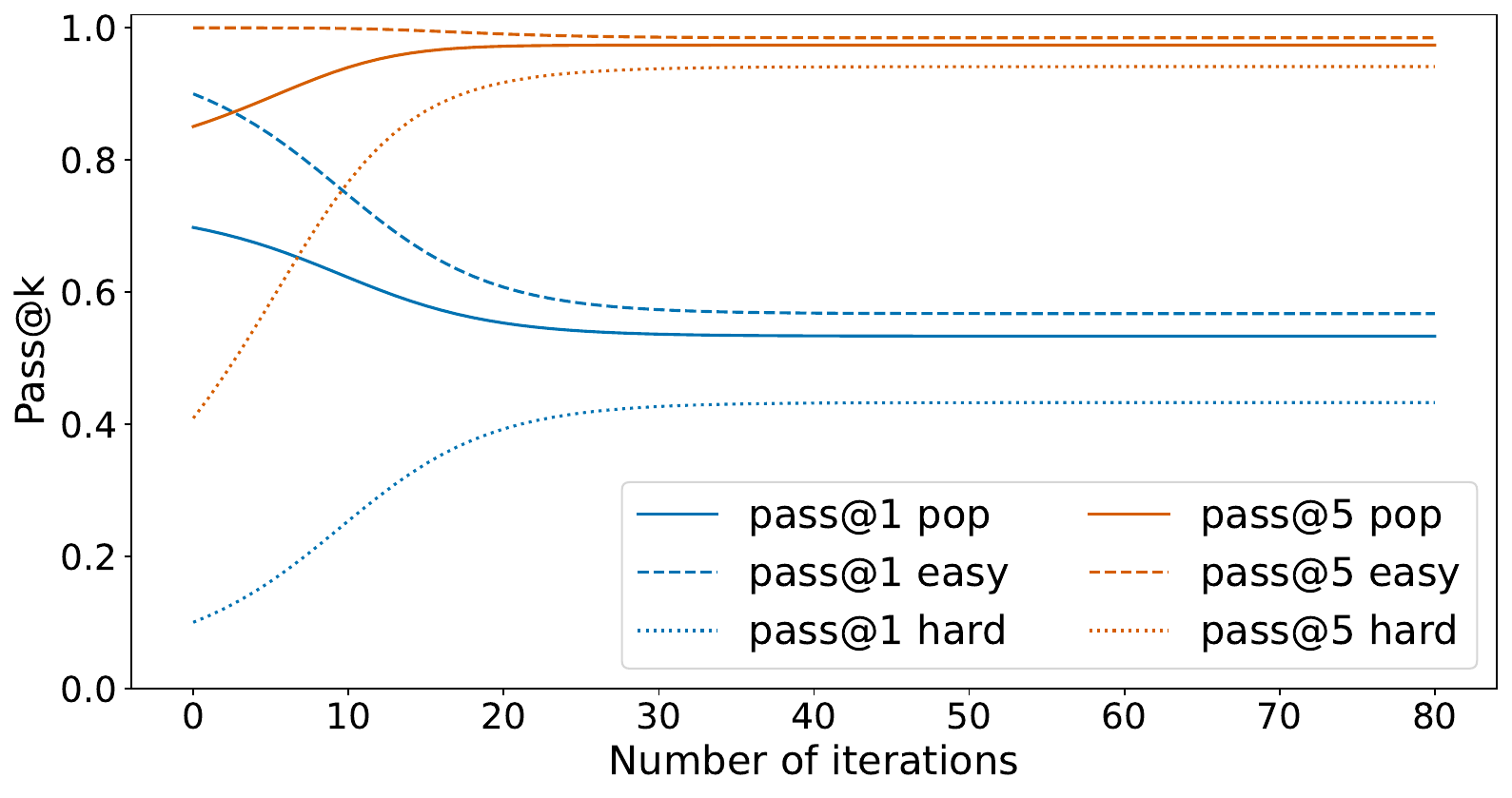}
  \caption{Pass@5 vs pass@1 in example of section~\ref{sec:toy-example} with pass@5 policy optimization, `pop' refers to `population' i.e. pass@$k$ as defined in \eqref{eq:pass@k}, `easy' and `hard' mean expectation is only taken over prompts labeled as easy respectively hard. Pass@5 increases while pass@1 decreases.}
  \label{fig:pass@k-pass@1}
\end{figure}

\section{Experiments}

We empirically test whether the pass@$k$ objective can induce {gradient conflict} with pass@1 on math reasoning, as predicted by Proposition~\ref{prop:grad-conflict}.

\noindent\textbf{Dataset and models.} We use the MATH dataset~\citep{hendrycksmath2021}, which contains competition-style high school math problems spanning seven subjects (Algebra, Counting \& Probability, Geometry, Intermediate Algebra, Number Theory, Precalculus, and Prealgebra). We randomly sample 2{,}000 problems for evaluation.
We run experiments with two reasoning models: DeepSeek-R1-Distill-Llama-8B and DeepSeek-R1-Distill-Qwen-7B. 

\noindent\textbf{Pass@$k$ gradient computation.} For computational efficiency, we compute pass@1 gradients with respect to policy parameters in the language model's final hidden layer (dimension $d=4096$ for Llama-8B, $d=3584$ for Qwen-7B). We compute pass@$k$ gradients using Monte Carlo estimates based on \eqref{eq:pass@k-grad} and pass@$k$ estimates developed in prior work \citep{walder-karkhanis25neurips,chen-et-al21eval-llms-code}. 

\noindent\textbf{Setup.} We create filtered data sets $\mathcal{D}_{\delta_1, \delta_2}$ of prompts and responses with varying difficulty thresholds $(\delta_1, \delta_2)$ consisting of 
(i) hard prompts ($p_{\theta}(x_i) < \delta_2$) and (ii) easy prompts ($p_{\theta}(x_i) > \delta_1$). We test 7 combinations with $\delta_1 \in \{0.80, 0.85, 0.90\}$ and $\delta_2 \in \{0.05, 0.10, 0.15\}$. For each combination, we compute estimates of the agreement scores~$a_{\theta}(x)$ as defined in \eqref{eq:alignment-score}, pass@$k$ weights~$w_{k,\theta}(x)$ (see \eqref{eq:pass@k-weights-formula}) and estimated average weighted agreement scores over prompts, which correspond to pass@$k$ and pass@1 gradient inner product as shown in Proposition~\ref{prop:grad-conflict}.  
See Appendix~\ref{app:threshold_robustness} for all threshold configurations.

\begin{figure}[htbp]
  \centering
  \begin{subfigure}{\textwidth}
    \centering
    \includegraphics[width=\textwidth]{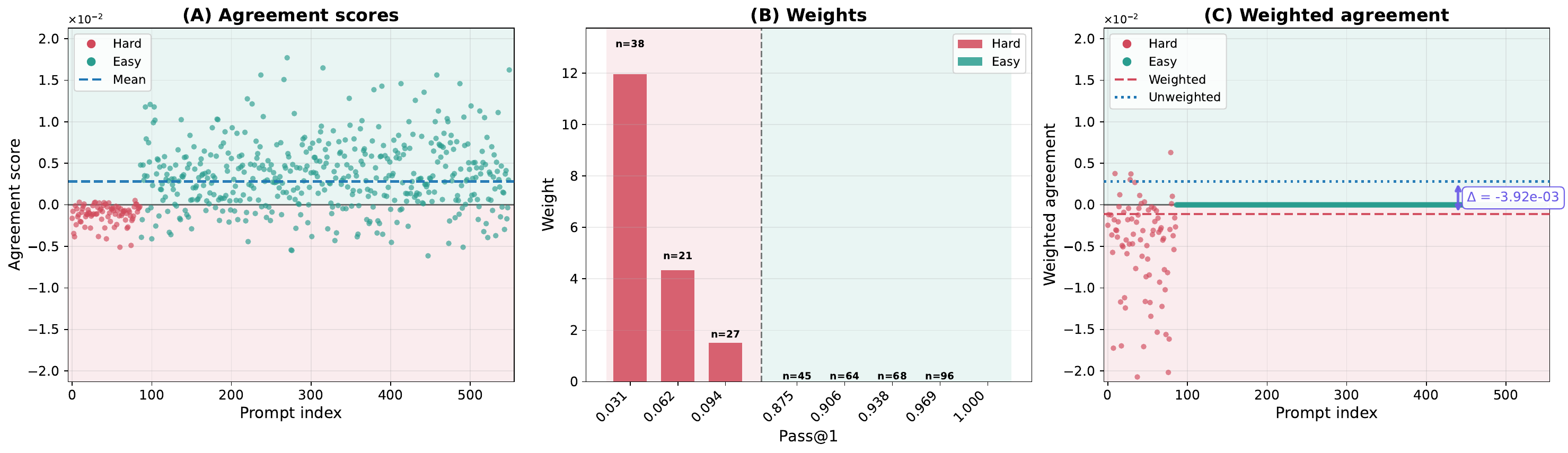}
    \caption{DeepSeek-R1-Distill-Llama-8B: $\delta_1=0.85$, $\delta_2=0.10$ (550 prompts: 86 hard, 464 easy, ratio 5.4:1)}
    \label{fig:llama_085_010}
  \end{subfigure}
  
  \vspace{1em}
  
  \begin{subfigure}{\textwidth}
    \centering
    \includegraphics[width=\textwidth]{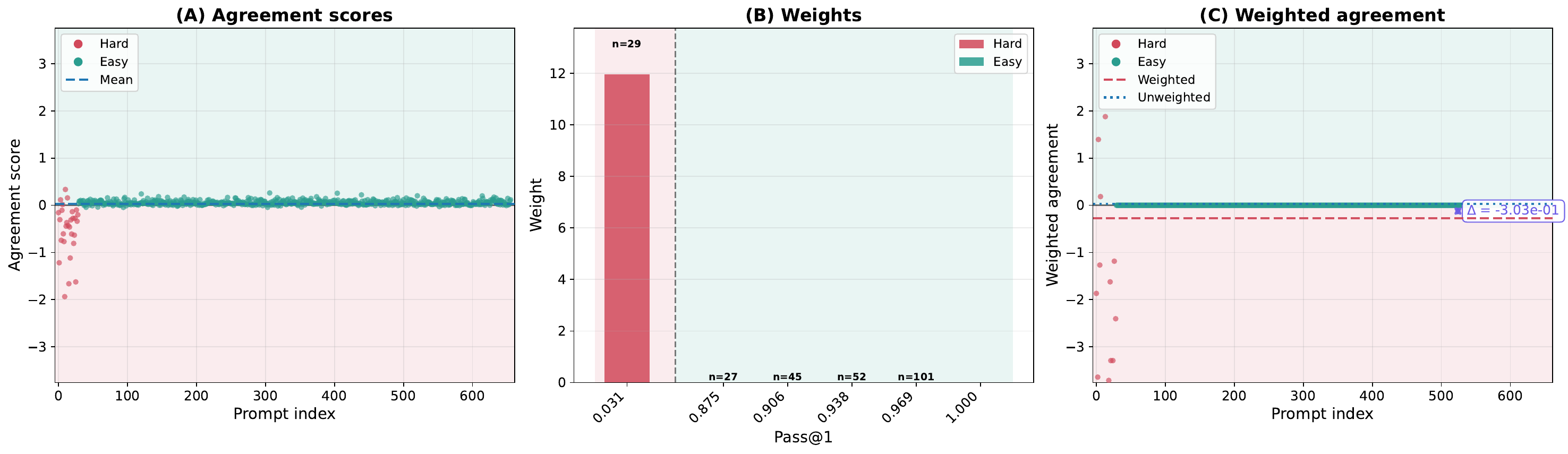}
    \caption{DeepSeek-R1-Distill-Qwen-7B: $\delta_1=0.85$, $\delta_2=0.05$ (656 prompts: 29 hard, 627 easy, ratio 21.6:1)}
    \label{fig:deepseek_085_005}
  \end{subfigure}
  
\caption{
  \textbf{Pass@$k$ optimization hurts pass@1 performance: Gradient conflict across two models.}
  Both models demonstrate how pass@$k$ reweighting causes gradient conflict with pass@1, despite different threshold settings.
  Agreement scores are computed relative to the pass@1 gradient of filtered prompts.
  \textbf{Left column (A):} Agreement scores show separation between hard negatives (red) and easy positives (green), with unweighted means shown as blue dashed lines.
  \textbf{Middle column (B):} Pass@$k$ weights grouped by pass@1 probability, showing extreme weight disparity ($\sim$10$^{28}$:1) between hard and easy prompts.
  \textbf{Right column (C):} Weighted contributions (estimates of pass@$k$ vs pass@1 inner products) demonstrate how pass@$k$ reweighting \emph{hurts pass@1}. \emph{Purple arrows highlight the critical downward shift} from the unweighted mean (blue dotted, same as plot A) to the weighted mean (red dashed).
  \textbf{(a) Deepseek-Llama:} The \emph{downward shift} from $+2.80 \times 10^{-3}$ to $-1.12 \times 10^{-3}$ ($\Delta = -3.92 \times 10^{-3}$) flips the gradient from positive to negative, resulting in inner product $-0.613$. 
  \textbf{(b) DeepSeek-Qwen:} The \emph{dramatic downward shift} from $+2.97 \times 10^{-2}$ to $-2.74 \times 10^{-1}$ ($\Delta = -3.04 \times 10^{-1}$) demonstrates even stronger gradient conflict. Despite only 29 hard prompts vs 627 easy (21.6:1 ratio), the extreme weight disparity causes hard negatives to completely dominate, resulting in inner product $-181$. 
}
  \label{fig:gradient_misalignment_two_models}
\end{figure}

\noindent\textbf{Discussion.} Our experiments validate the theoretical predictions in several ways. First, the agreement scores (column A in Figure~\ref{fig:gradient_misalignment_two_models}) show clear separation between hard prompts (red points, negative agreement scores clustered below zero) and easy prompts (green points, positive agreement scores above zero), confirming that prompt interference exists in practice. The unweighted mean agreement (blue dashed line) is positive, indicating that under uniform weighting, the population gradient would improve pass@1.

\noindent Second, the pass@$k$ weights (column B in Figure~\ref{fig:gradient_misalignment_two_models}) reveal extreme disparity in how pass@$k$ values different prompts. The histogram shows that hard prompts (red bars, low pass@1 values $\lesssim 0.1$) receive weights of $n=38$ at scale 12, $n=21$ at scale 4, and $n=27$ at scale 1, while easy prompts (green bars, high pass@1 values $\gtrsim 0.85$) receive negligible weights near $10^{-28}$. This creates weight ratios exceeding $10^{28}:1$, demonstrating the extreme reweighting mechanism our theory identifies: pass@$k$ dramatically amplifies the influence of low-success prompts while effectively ignoring high-success prompts.

\noindent Third, column C demonstrates the consequence of this reweighting through weighted agreement scores. The purple arrows in panel C highlight the critical downward shift from the unweighted mean (blue dotted line, matching column A) to the weighted mean (red dashed line). For Llama-8B (Figure~\ref{fig:llama_085_010}, this shift is $\Delta = -3.92\times10^{-3}$, moving from $+2.80\times10^{-3}$ to $-1.12\times10^{-3}$ and flipping the gradient alignment from positive to negative, resulting in an inner product of $-0.613$. For Qwen-7B (Figure~\ref{fig:deepseek_085_005}), despite having only 29 hard prompts versus 627 easy prompts (a 21.6:1 ratio heavily favoring easy prompts), the extreme weight disparity causes an even more dramatic shift of $\Delta = -3.04\times10^{-1}$, moving from $+2.97\times10^{-2}$ to $-2.74\times10^{-1}$, yielding a strongly negative inner product of $-181$.

 \noindent This validates Proposition~\ref{prop:grad-conflict}: when the negative covariance between pass@$k$ weights $w_{k,\theta}(x)$ and agreement scores $a_\theta(x)$ dominates the positive term $\mathbb{E}[w_{k,\theta}(x)] \cdot \|\nabla J_1(\theta)\|^2$, gradient conflict occurs. The visualization confirms our causal chain: pass@$k$ upweights hard prompts (column B) $\rightarrow$ these hard prompts exhibit negative interference $a_\theta(x) < 0$ (column A) $\rightarrow$ the reweighted gradient conflicts with pass@1's gradient (column C). The robustness across all seven threshold configurations (see Appendix~\ref{app:threshold_robustness}, Figure~\ref{fig:all_threshold_configs_llama}) demonstrates this is not an artifact of threshold selection but a fundamental property of pass@$k$ optimization in the presence of negatively interfering prompts.

\noindent \textbf{Additional ablations: Which prompts are amplified by pass@$k$?} To have a complementary view of the amplification mechanism, we directly plot weights against agreement scores, with color encoding pass@1 probability. See Figure~\ref{fig:weight_vs_alignment_scatter} which shows results for both models across three representative threshold configurations. The critical observation is where these weights concentrate in agreement-score space (x-axis). High-weight prompts cluster in the negative-agreement region ($a_\theta(x) < 0$), while low-weight prompts cluster in the positive-agreement region. This means pass@$k$'s design, which correctly prioritizes hard prompts for its own objective, systematically amplifies negatively interfering prompts that hurt pass@1.

\begin{figure*}[!h]
  \centering
  % First row: Llama-8B
  \begin{subfigure}{0.32\textwidth}
    \centering
    \includegraphics[width=\textwidth,height=0.25\textheight,keepaspectratio]{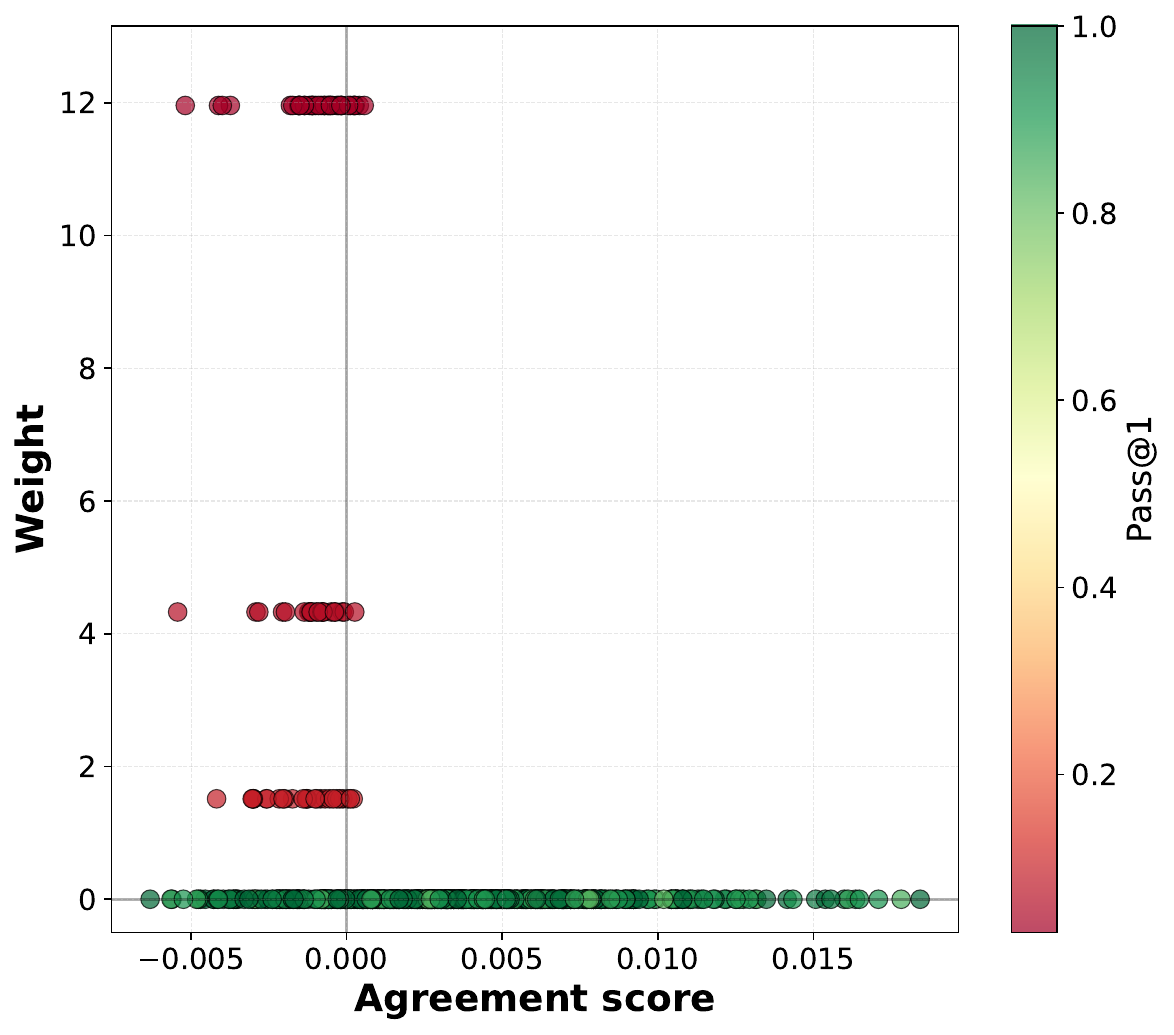}
    \caption{Llama: $\delta_1=0.80$, $\delta_2=0.10$}
    \label{fig:llama_080_010_weight}
  \end{subfigure}
  \hfill
  \begin{subfigure}{0.32\textwidth}
    \centering
    \includegraphics[width=\textwidth,height=0.25\textheight,keepaspectratio]{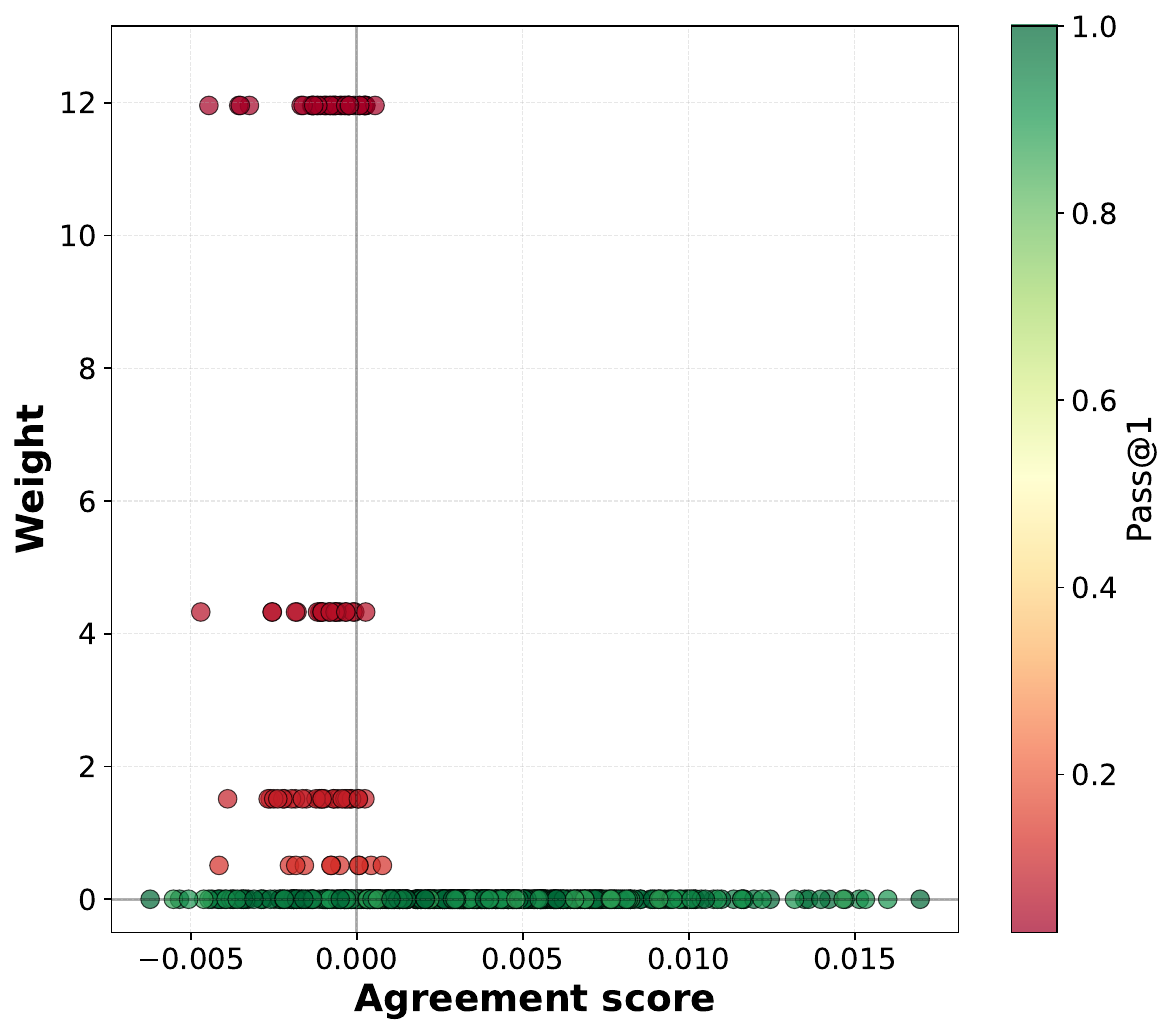}
    \caption{Llama: $\delta_1=0.85$, $\delta_2=0.15$}
    \label{fig:llama_085_015_weight}
  \end{subfigure}
  \hfill
  \begin{subfigure}{0.32\textwidth}
    \centering
    \includegraphics[width=\textwidth,height=0.25\textheight,keepaspectratio]{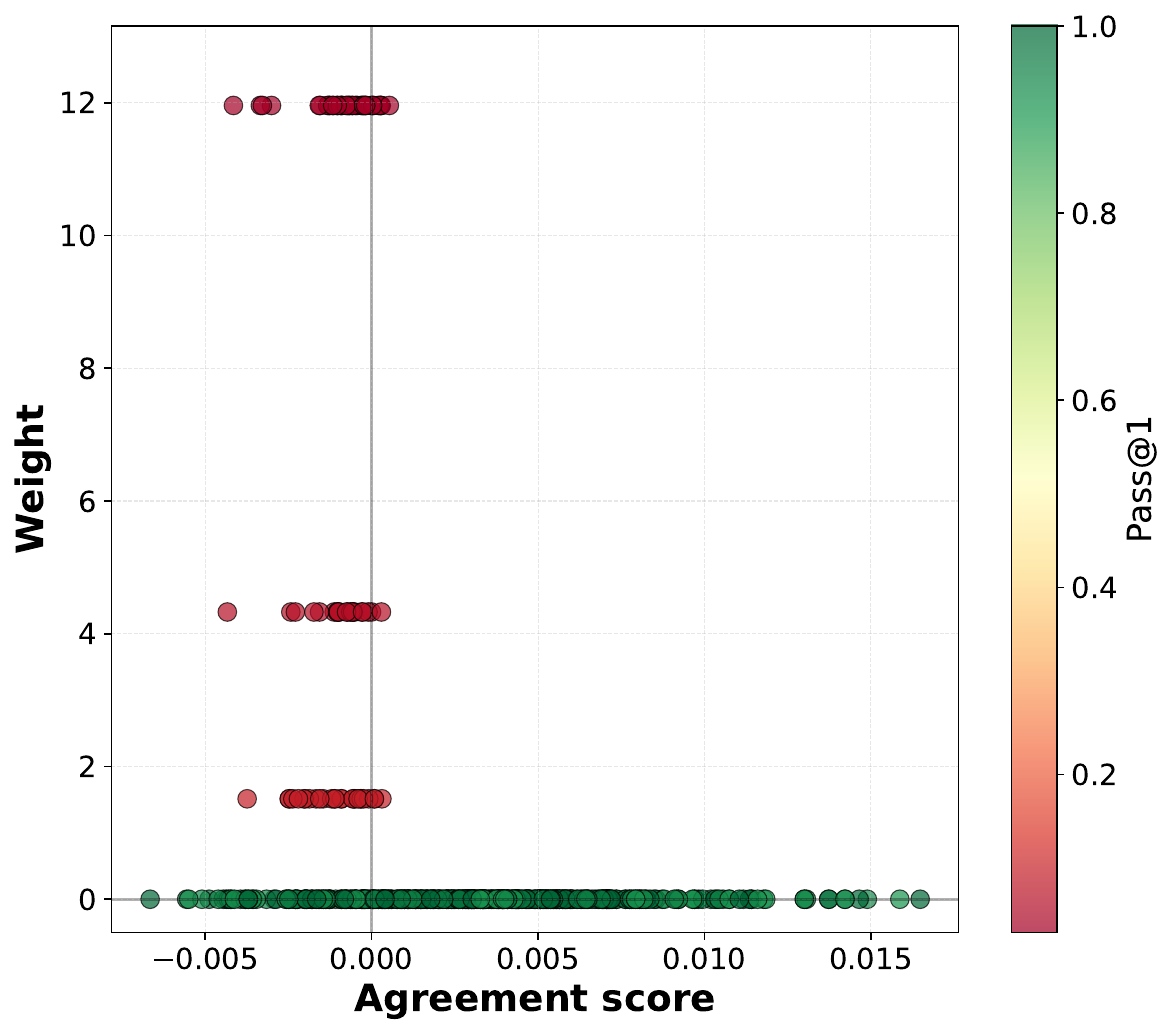}
    \caption{Llama: $\delta_1=0.90$, $\delta_2=0.10$}
    \label{fig:llama_085_015_weight}
  \end{subfigure}
  
  \vspace{1.5em}
  
  % Second row: DeepSeek-Qwen-7B
  \begin{subfigure}{0.32\textwidth}
    \centering
    \includegraphics[width=\textwidth,height=0.25\textheight,keepaspectratio]{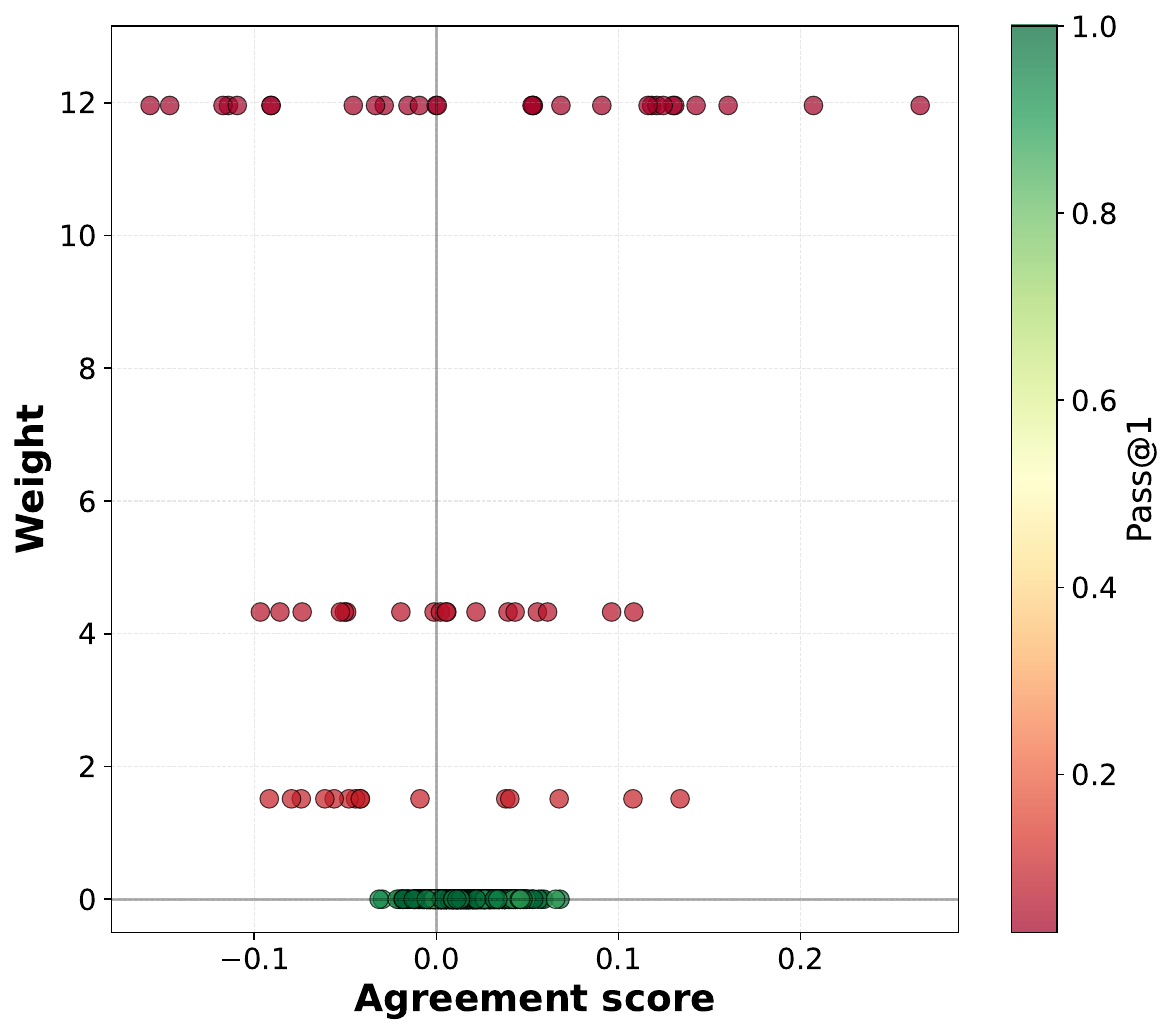}
    \caption{Qwen: $\delta_1=0.80$, $\delta_2=0.10$}
    \label{fig:qwen_080_010_weight}
  \end{subfigure}
  \hfill
  \begin{subfigure}{0.32\textwidth}
    \centering
    \includegraphics[width=\textwidth,height=0.25\textheight,keepaspectratio]{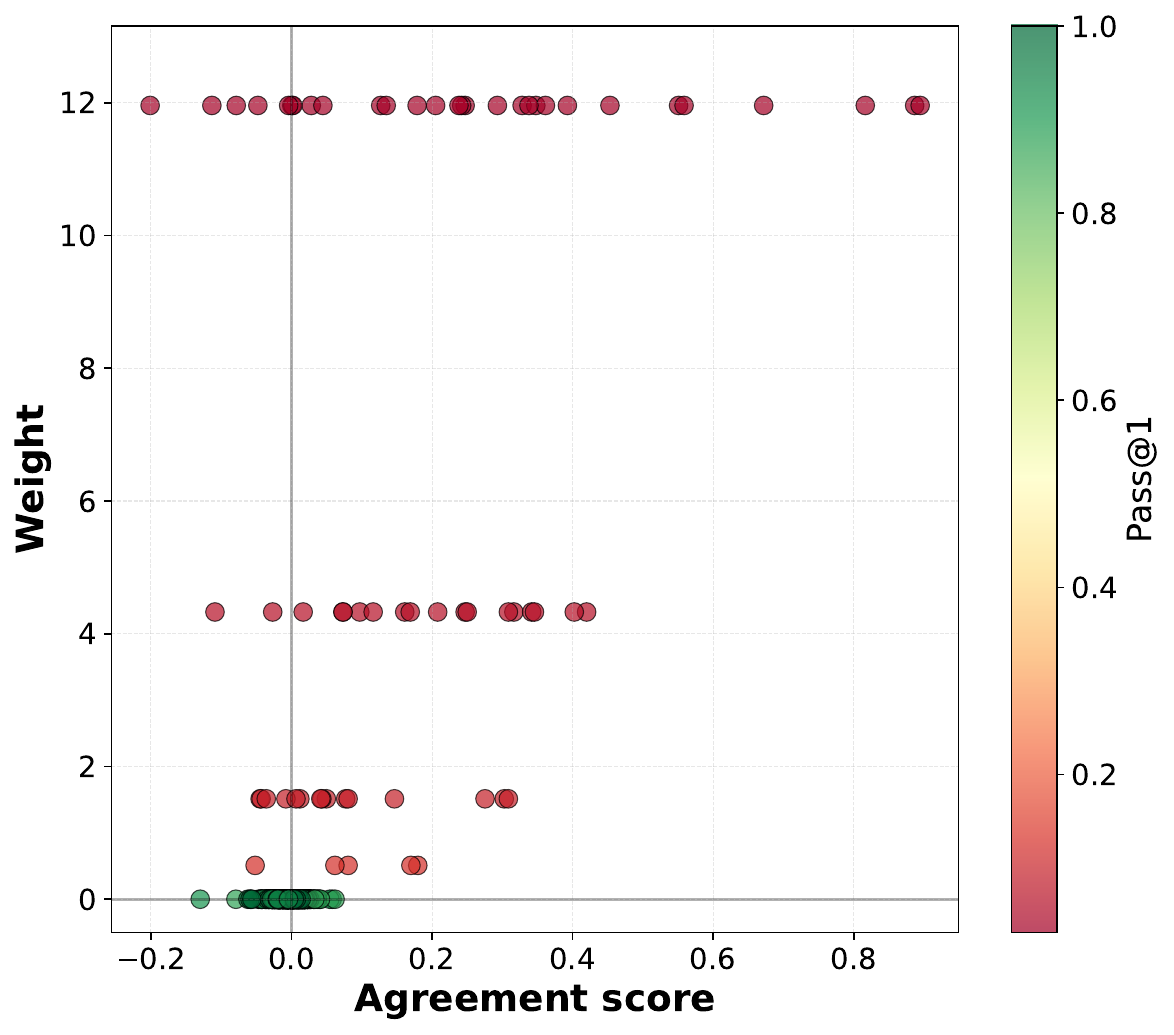}
    \caption{Qwen: $\delta_1=0.85$, $\delta_2=0.15$}
    \label{fig:qwen_085_015_weight}
  \end{subfigure}
  \hfill
  \begin{subfigure}{0.32\textwidth}
    \centering
    \includegraphics[width=\textwidth,height=0.25\textheight,keepaspectratio]{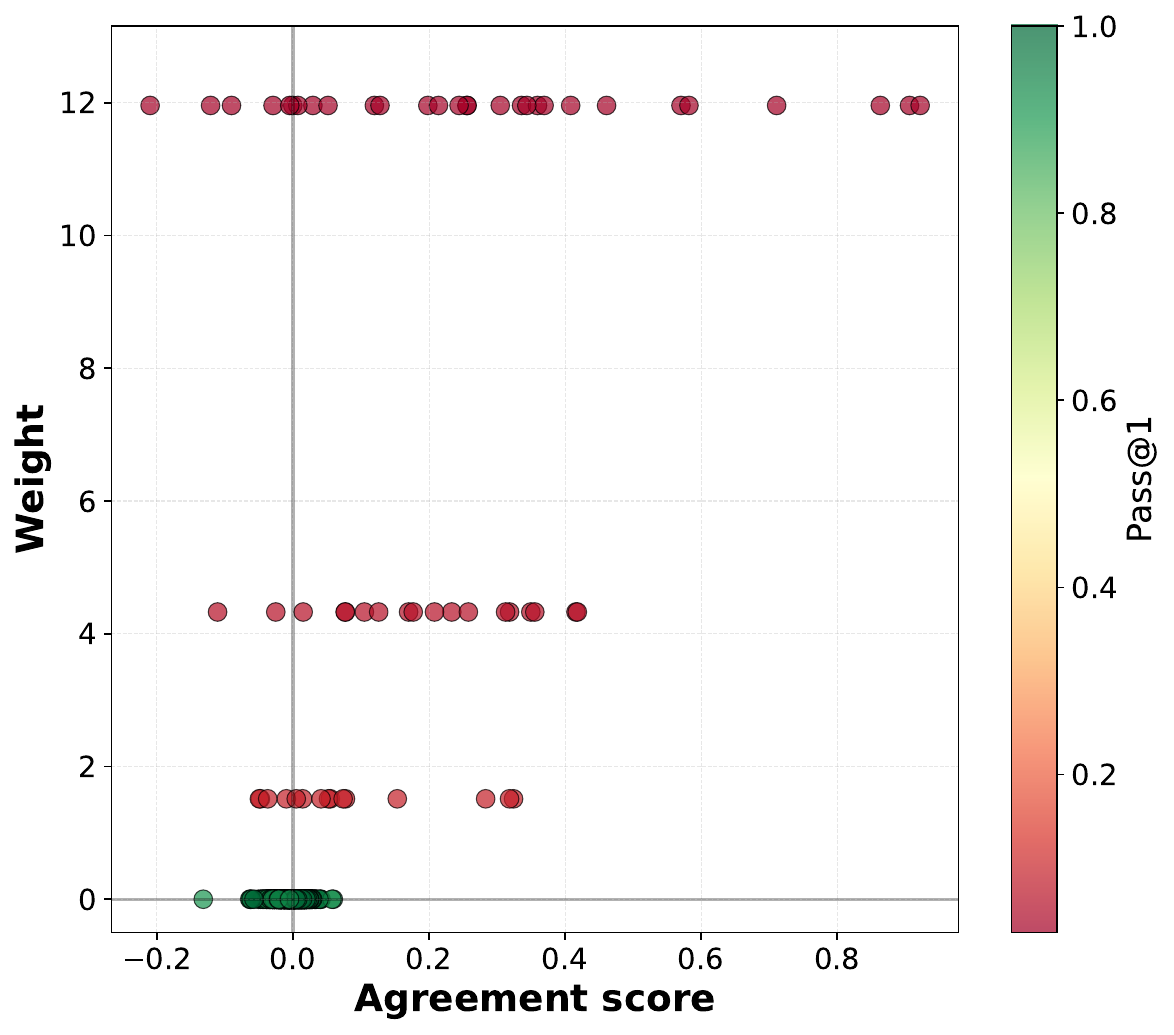}
    \caption{Qwen: $\delta_1=0.90$, $\delta_2=0.10$}
    \label{fig:qwen_090_010_weight}
  \end{subfigure}
  
  \caption{
    \textbf{Pass@$k$ amplification pattern across models and thresholds.}
    Each point represents one prompt, with the corresponding agreement score (x-axis), pass@$k$ weight (y-axis), and pass@1 probability shown by color (red = low, green = high).
    \textbf{Consistent pattern:} Prompts with negative alignment (left), low pass@1 (red), and high weights (4-12) cluster upper-left, while prompts with positive alignment (right), high pass@1 (green), and zero weights ($\sim$10$^{-28}$) appear lower-right.
    \textbf{Top row (a-c):} DeepSeek-R1-Distill-Llama-8B across three $\delta_2$ values (0.05, 0.10, 0.15).
    \textbf{Bottom row (d-f):} DeepSeek-R1-Distill-Qwen-7B with same thresholds.
  }
  \label{fig:weight_vs_alignment_scatter}
\end{figure*}

\section{Related Work}
\label{sec:related-work}

\noindent\textbf{Inference-time fine-tuning.} Inference-time objective optimization has recently witnessed a surge of interest for optimizing language model performance for specific inference-time procedures such as win rate \citep{balashankar-et-al25infalign}, Best-of-N \citep{sessa-et-al25bond,amini-et-al25vBoN,chow-et-al25infaware-BoN,beirami-et-al25theoretical-BoN,aminian-et-al25bon}, majority voting \citep{uesato-et-al22,lightman-et-al24,du-et-al25} or more recently their combination \citep{di-et-al25best-of-majority}. \citet{davis-recht25} showed that several existing algorithms for post-training large language models (such as rejection sampling fine-tuning (see e.g. \citet{xiong-et-al25minimalist}) and GRPO \citep{shao-et-al24deepseekmath}) can be seen as instances of stochastic gradient ascent for maximizing a monotonically increasing function of the probability of giving a correct answer given a prompt, in the binary reward setting. 

\noindent\textbf{Pass@$k$ optimization.} In this work, we focus on the special case of the pass@$k$ metric which has been introduced as an evaluation metric in \citet{chen-et-al21eval-llms-code} for code generation tasks, similarly to a metric proposed earlier in \citet{kulal-et-al19spoc}. Given access to verifiable rewards, \citet{chen-et-al21eval-llms-code} proposed an unbiased estimator of the pass@$k$ metric. Inspired by inference-aware optimization gains, \citet{tang-et-al25icml} recently proposed to directly perform pass@$k$ policy optimization with variance-reduced pass@$k$ policy gradients using the control-variates method.  
Concurrently, \citet{walder-karkhanis25neurips} introduced new lower variance unbiased estimators for pass@$k$ and its gradient using more general U-statistics and demonstrated its performance for solving hard problems.  

\noindent  \citet{mahdavi-et-al25pg-pass@k} derived expressions for the pass@$k$ objective and pass@$k$ policy gradients as presented in \eqref{eq:pass@k-grad} and proposed a new pass@$k$ gradient reweighting method with a tunable parameter $\gamma$ different from $k$. \citet{yu25} also derives per-prompt pass@$k$ policy gradients and shows that per-prompt pass@$k$ and pass@1 gradients are positively collinear vectors, questioning the soundness of considering the pass@$k$ objective and advocating for different exploration mechanisms while keeping pass@$k$ as an inference-time evaluation metric. We show in our work that the \textit{expected} pass@$k$ and pass@1 gradients over the prompt distribution may not necessarily be positively collinear.  

\noindent  \citet{chen-et-al25pass@ktraining-exploration} investigated how pass@$k$ model training balances the exploration and exploitation capabilities of large language models. \citet{jiang-et-al25risk-sensitive-rl} further considered a different risk-sensitive objective than pass@$k$ based on a soft maximum (log-sum-exp) objective depending on a temperature parameter, providing a knob to trade-off between pass@$k$ and pass@1 performance. \citet{peng-et-al25simko} proposed Simple Pass@$K$ Optimization (SimKO) which redistributes gradient updates across the top-$K$ token-level probability candidates to promote exploration and improve the balance between exploitation and exploration.

\noindent  \citet{thrampoulidis-et-al25} connects policy gradient algorithms for pass@$k$ optimization \citep{mahdavi-et-al25pg-pass@k,tang-et-al25icml,walder-karkhanis25neurips} to advantage shaped GRPO methods \citep{chen-et-al25pass@ktraining-exploration,shao-et-al24deepseekmath,deng-et-al25,mroueh25rlvr-grpo}.

\noindent To the best of our knowledge, prior work on pass@$k$ optimization does not theoretically characterize when and why optimizing pass@$k$ can degrade pass@1 under pass@$k$ policy optimization; we provide such a characterization.

\noindent\textbf{Interference.} The concept of \textit{ray interference} has been previously introduced and investigated in multi-task reinforcement learning \citep{schaul-et-al19ray-interference}. As an analogy with multi-task learning, tasks can be viewed as prompts (or problems to solve in the special case of mathematical reasoning) in our LLM post-training setting. Instead of composite optimization objectives in the form of a sum over tasks, pass@$k$ objectives are expected probabilities of success over a prompt distribution, which may be discrete or continuous (see \eqref{eq:pass@k}). Our definition of prompt interference (Definition~\ref{def:prompt-interference}) parallels \citet[Definition~1]{schaul-et-al19ray-interference}, using gradients of pass@1 in our LLM post-training setting rather than gradients of task loss functions in multi-task learning. Related notions of gradient conflict/interference have also been studied in multi-task learning (e.g., \citealp{yu-et-al20gradient-surgery-mtl}). To extend the multi-task learning analogy discussed in \cite{qu-et-al26pope}, tasks can also be thought of as clusters of prompts labeled by degree of difficulty. 

\noindent Inspired by \citet{schaul-et-al19ray-interference}, \citet{qu-et-al26pope} recently investigated learning to reason on hard problems when no correct rollouts are sampled and argue that pass@$k$ optimization does not address this challenge. They emphasize the role of ray interference in hindering learning on heterogeneous prompt mixtures; in particular, Section~3.2 discusses negative interference between easy and hard problems, and how pass@$k$ policy optimization does not mitigate its consequences. Conceptually, \citet{qu-et-al26pope} study interference as an exploration/solvability bottleneck on heterogeneous mixtures, while we analyze the pass@$k$ objective itself and show how prompt-level gradient conflict can induce an explicit pass@$k$--pass@1 trade-off.

\noindent Our findings are consistent with the empirical insights and intuitions in \cite[Section~3.2]{qu-et-al26pope}. We offer a complementary theoretical view by formalizing prompt interference for LLM post-training and providing conditions under which pass@1 degradation can occur under pass@$k$ policy optimization via prompt-level gradient conflict (see Proposition~\ref{prop:grad-conflict}, Corollary~\ref{cor:dominating-neg-interfer-prompts}, and Proposition~\ref{prop:pass@1-degradation}). Our results also connect to several open questions highlighted by \citet{qu-et-al26pope}.

\noindent While our paper focuses on the theoretical understanding of the pass@$k$ vs pass@1 trade-off and limitations of pass@$k$ policy optimization, \citet{qu-et-al26pope} propose a framework (Privileged On-Policy Exploration, POPE) to mitigate ray interference and improve the solvability of hard problems by guiding on-policy exploration. Our theoretical insights may suggest alternative mitigation mechanisms and clarify when such interventions are necessary and effective.  

\section{Conclusion and Future Work}

In this work, we investigated why pass@$k$ policy optimization can degrade pass@1, showing that pass@$k$ and pass@1 gradients can be conflicting when pass@$k$ weights amplify negative prompt interference. Our findings open several avenues for future work. An immediate direction is to design and evaluate methods to mitigate this conflict issue. More broadly, it would be interesting to investigate more general inference-time objectives beyond pass@$k$ (such as other concave transforms of the probability of success) in view of solving harder reasoning tasks and enhancing exploration with more flexible and targeted prompt reweighting without sacrificing pass@1 performance. Our similarity metric can offer a good starting point for gradient surgery. 

\section*{Acknowledgements}

We thank Aviral Kumar for pointing us to relevant related work, which helped us to update the related work discussion at the end of Section~\ref{sec:related-work} and to add remarks on prior work on interference immediately after Definition~\ref{def:prompt-interference}.

\newpage
\bibliography{references}

\newpage
\renewcommand{\contentsname}{Table of Contents}
\tableofcontents

\appendix

\section{Discussion of the covariance condition in Proposition~\ref{prop:grad-conflict}}
\label{appx:covar-condition}

Introducing the correlation $\rho_{w,a}(\theta) := \frac{\cov(w_{k,\theta}(x),a_{\theta}(x))}{\sigma_{w}(\theta) \cdot \sigma_{a}(\theta)}$ where $\sigma_{w}(\theta), \sigma_{a}(\theta)$ are respectively the standard deviations of $w_{k,\theta}(x)$ and $a_{\theta}(x)$, we can then rewrite condition \eqref{eq:conflict-2} equivalently as: 
\begin{equation}
\rho_{w,a}(\theta) < - \frac{\mathbb{E}[w_{k,\theta}(x)] \cdot \|\nabla J_1(\theta)\|^2}{\sigma_{w}(\theta) \cdot \sigma_{a}(\theta)}\,.
\end{equation}
This condition emphasizes that conflict requires strong negative correlation between hardness weights and agreement scores as well as sufficient variance. If the weights barely vary ($\sigma_w(\theta) \simeq 0$), pass@$k$ cannot strongly shift focus across prompts and conflict is harder. If agreement scores barely vary ($\sigma_a(\theta) \simeq 0$) there is no meaningful conflict structure to exploit that can lead to misaligned gradients.    

\section{A kernel view of gradient conflict beyond $k=1$}
\label{sec:kernel-grad-conflict-k-m}

In this section, we further investigate the gradient conflict phenomenon between pass@$k$ and pass@$m$ for any $k, m \geq 1$. 
Using the kernel~$\kappa_{\theta}$ defined in \eqref{eq:kernel-def-1}, we now relate pass@$k$ and pass@$m$ gradients for any $k, m \geq 1$ using \eqref{eq:pass@k-grad} to obtain: 
\begin{equation}
\label{eq:inner-grad-k-grad-m}
\langle \nabla J_k(\theta), \nabla J_m(\theta) \rangle 
= \mathbb{E}_{x, x' \sim \mathcal{D}}[w_{k, \theta}(x) w_{m, \theta}(x') \kappa_{\theta}(x,x')]\,.
\end{equation}

This identity separates two factors: (1) where each objective focuses via the weights $w_{k,\theta}(x), w_{m,\theta}(x')$ with prompt reweighting toward `hard' prompts as $k$ grows; and (2) whether those regions interfere via the sign structure of $\kappa_{\theta}(x,x')$ which is independent of the parameter $k$. Conflict between pass@$k$ and pass@$m$ gradients happens when the objectives emphasize prompt regions whose cross-similarity (encoded by the kernel) is negative.

The following result which is an immediate consequence of \eqref{eq:inner-grad-k-grad-m} shows that conflict requires interference. 

\begin{proposition}[No interference regime: No negative transfer $\implies$ no conflict] 
For any policy parameter $\theta \in \mathbb{R}^d$, if $\kappa_{\theta}(x,x') \geq 0$ for $\mathcal{D} \times \mathcal{D}$ a.e. $(x,x')$ then, $\langle \nabla J_k(\theta), \nabla J_m(\theta) \rangle \geq 0$ for all $k, m \geq 1\,.$
\end{proposition}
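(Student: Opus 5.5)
The statement follows directly from the kernel identity \eqref{eq:inner-grad-k-grad-m}, so I will first justify that identity and then read off the sign.

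\emph{Deriving the identity.} By \eqref{eq:pass@k-grad},
\[
\nabla J_k(\theta)=\mathbb{E}_{x\sim\mathcal{D}}[w_{k,\theta}(x)\nabla p_\theta(x)], \qquad
\nabla J_m(\theta)=\mathbb{E}_{x'\sim\mathcal{D}}[w_{m,\theta}(x')\nabla p_\theta(x')].
\]
Take two independent copies $x,x'\sim\mathcal{D}$. Bilinearity of the inner product, together with exchanging the inner product and the expectations (Fubini), gives
\[
\langle \nabla J_k(\theta),\nabla J_m(\theta)\rangle=\mathbb{E}_{x,x'}\big[w_{k,\theta}(x)w_{m,\theta}(x')\langle\nabla p_\theta(x),\nabla p_\theta(x')\rangle\big].
\]
The inner product inside is $\kappa_\theta(x,x')$ by \eqref{eq:kernel-def-1}.

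\emph{Integrability.} The one technical point is justifying the interchange. Two facts suffice:
\begin{itemize}
\item The weights are bounded: $0\le w_{j,\theta}\le j$ for every $j$, since $p_\theta(x)\in[0,1]$.
\item The per-prompt gradients are bounded: by \eqref{eq:pass@1pg}, Jensen and Cauchy--Schwarz, $\|\nabla p_\theta(x)\|\le \mathbb{E}[\|s_\theta\|]\le G$ under Assumption~\ref{as:E-LS}. More generally it is enough that $\nabla p_\theta$ is square-integrable under $\mathcal{D}$, which is implicit whenever the gradients in \eqref{eq:pass@k-grad} exist as Bochner integrals.
\end{itemize}
Hence the integrand is bounded in absolute value by $km\|\nabla p_\theta(x)\|\|\nabla p_\theta(x')\|$, which is integrable under $\mathcal{D}\times\mathcal{D}$. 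Fubini therefore applies.

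\emph{Sign conclusion.} We have $w_{k,\theta}(x)\ge 0$ and $w_{m,\theta}(x')\ge 0$ everywhere, and $\kappa_\theta(x,x')\ge 0$ for $\mathcal{D}\times\mathcal{D}$-a.e.\ $(x,x')$ by hypothesis. So the integrand $w_{k,\theta}(x)w_{m,\theta}(x')\kappa_\theta(x,x')$ is nonnegative almost everywhere, and its expectation is nonnegative. This holds for all $k,m\ge 1$, which is the claim.

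The only step needing real care is the integrability/Fubini justification. Everything else is algebra plus nonnegativity of the weights $w_k(p)=k(1-p)^{k-1}$ on $[0,1]$.
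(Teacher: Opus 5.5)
Your proof is correct and follows the paper's route: the paper presents the result as an immediate consequence of the kernel identity \eqref{eq:inner-grad-k-grad-m}, obtained from \eqref{eq:pass@k-grad} by bilinearity, and reads off the sign from $w_{k,\theta},w_{m,\theta}\ge 0$ and $\kappa_\theta\ge 0$ a.e.; your Fubini check adds rigor the paper omits. One small imprecision there: Bochner integrability of the gradients gives first-moment, not square, integrability, but that is all you need, since by independence $\mathbb{E}_{x,x'}[w_{k,\theta}(x)w_{m,\theta}(x')\|\nabla p_\theta(x)\|\|\nabla p_\theta(x')\|]$ factorizes as $\mathbb{E}[w_{k,\theta}(x)\|\nabla p_\theta(x)\|]\cdot\mathbb{E}[w_{m,\theta}(x)\|\nabla p_\theta(x)\|]<\infty$.
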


If pairwise prompt similarities are nonnegative on the support of the prompt distribution~$\mathcal{D}$ then all pass@$k$ objectives are mutually aligned. Therefore, conflict can only occur when pairwise similarities are not always nonnegative and depends on how the weights $w_{k, \theta}, w_{m, \theta}$ shift mass. In other words, pass@$k$ objectives cannot conflict unless there is negative transfer between some prompts.

\section{Proofs} 

\subsection{Proof of Proposition~\ref{prop:grad-conflict}: Gradients conflict characterization}

\begin{proof}
First recall from \eqref{eq:pass@k-grad} that for any $k \geq 1$ and any $\theta \in \mathbb{R}^d$,
\begin{equation}
\nabla J_k(\theta) = \mathbb{E}_{x \sim \mathcal{D}}[ w_{k}(p_{\theta}(x)) \nabla p_{\theta}(x)] = \mathbb{E}_{x \sim \mathcal{D}}[ w_{k}(p_{\theta}(x)) \nabla J_1(x;\theta)]\,.
\end{equation}
Then it immediately follows that: 
\begin{align}
\label{eq:inner-prod-proof}
\langle \nabla J_k(\theta), \nabla J_1(\theta) \rangle 
&= \langle \mathbb{E}_{x \sim \mathcal{D}}[ w_{k}(p_{\theta}(x)) \nabla J_1(x;\theta)],  \nabla J_1(\theta) \rangle  \nonumber\\
&= \mathbb{E}_{x \sim \mathcal{D}}[\langle w_{k}(p_{\theta}(x)) \nabla J_1(x;\theta),  \nabla J_1(\theta) \rangle]  \nonumber\\
&= \mathbb{E}_{x \sim \mathcal{D}}[w_{k}(p_{\theta}(x)) a_{\theta}(x)]\,, 
\end{align}
where we recall that $a_{\theta}(x) := \langle \nabla J_1(x;\theta), \nabla J_1(\theta) \rangle$\,.
By definition of the covariance between two real-valued random variables: 
\begin{align}
\cov(w_{k,\theta}(x), a_{\theta}(x)) &= \mathbb{E}_{x \sim \mathcal{D}}[w_{k,\theta}(x) \cdot a_{\theta}(x)] - \mathbb{E}_{x \sim \mathcal{D}}[w_{k,\theta}(x)] \cdot \mathbb{E}_{x \sim \mathcal{D}}[ a_{\theta}(x)]\\
&= \langle \nabla J_k(\theta), \nabla J_1(\theta) \rangle  - \mathbb{E}_{x \sim \mathcal{D}}[w_{k,\theta}(x)] \cdot \|\nabla J_1(\theta)\|^2\,,
\end{align}
where the second equality follows from using \eqref{eq:inner-prod-proof} and noticing that $\mathbb{E}_{x \sim \mathcal{D}}[ a_{\theta}(x)] =  \|\nabla J_1(\theta)\|^2$ by definition of $a_{\theta}(x)\,.$ 
Rearranging the above identity then gives: 
\begin{equation}
\label{eq:covar-inner-prod}
\langle \nabla J_k(\theta), \nabla J_1(\theta) \rangle = \mathbb{E}_{x \sim \mathcal{D}}[w_{k,\theta}(x)] \cdot \|\nabla J_1(\theta)\|^2 + \cov(w_{k,\theta}(x), a_{\theta}(x))\,.
\end{equation}
As a consequence of \eqref{eq:inner-prod-proof} and \eqref{eq:covar-inner-prod}, we have that $\langle \nabla J_k(\theta), \nabla J_1(\theta) \rangle < 0$ if and only if: 
$\mathbb{E}_{x \sim \mathcal{D}}[w_{k}(p_{\theta}(x)) a_{\theta}(x)] < 0$ which is also equivalent to the condition: 
\begin{equation}
\cov(w_{k,\theta}(x), a_{\theta}(x)) < - \mathbb{E}[w_{k,\theta}(x)] \cdot \|\nabla J_1(\theta)\|^2\,. 
\end{equation}
\end{proof}
\subsection{Proof of Corollary~\ref{cor:dominating-neg-interfer-prompts}: Dominating negatively interfering prompts}

\begin{proof}
First, Proposition~\ref{prop:grad-conflict} yields: 
\begin{equation}
\bigl\langle \nabla J_k(\theta), \nabla J_1(\theta)\bigr\rangle
=
\mathbb{E}\!\left[\omega_{k,\theta}(x)\,a_\theta(x)\right]\,.
\end{equation}
We decompose this inner product into 2 regions: the $m$-strongly negatively interfering region~$\mathcal{X}_{-}(\theta,m)$ and its complement, 
\begin{equation}
\label{eq:inner-prod-decomp}
\bigl\langle \nabla J_k(\theta),\ \nabla J_1(\theta)\bigr\rangle
=
\mathbb{E}\!\left[\omega_{k,\theta}(x)\,a_\theta(x)\right]
=
\mathbb{E}\!\left[\omega_{k,\theta}(x)\,a_\theta(x)\,\mathds{1}_{\{x\in\mathcal{X}_{-}(\theta,m)\}}\right]
+
\mathbb{E}\!\left[\omega_{k,\theta}(x)\,a_\theta(x)\,\mathds{1}_{\{x\notin\mathcal{X}_{-}(\theta,m)\}}\right]\,.
\end{equation}

We now upperbound each one of the above terms. For the first term, by definition of the set of prompts $\mathcal{X}_{-}(\theta,m)$, we have $a_\theta(x)\le -m$ for any $x \in \mathcal{X}_{-}(\theta,m)\,.$ As for the second term, we show that for any $x \notin \mathcal{X}_{-}(\theta,m),$ $|a_{\theta}(x)| \leq G^2$ using Assumption~\ref{as:E-LS} as follows. 

\noindent\textbf{Bounding $a_{\theta}(x).$} Using the definition of the agreement score $a_{\theta}(x)$ (see \eqref{eq:alignment-score}) together with Cauchy-Schwarz inequality, we have 
\begin{equation}
\label{eq:proof-bound-atheta}
a_{\theta}(x) = \langle \nabla J_1(x;\theta), \nabla J_1(\theta) \rangle \leq \|\nabla J_1(x;\theta)\| \cdot \|\mathbb{E}[\nabla J_1(x;\theta)]\| \leq \|\nabla J_1(x;\theta)\| \cdot \mathbb{E}[\|\nabla J_1(x;\theta)\|] \,.
\end{equation}
Then recalling the policy gradient expression of $\nabla J_1(x;\theta)$ in \eqref{eq:pass@1pg} and using Jensen's inequality, we have: 
\begin{equation}
\label{eq:pass@1-grad-bound}
\|\nabla J_1(x;\theta)\|^2 = \|\mathbb{E}_{y \sim \pi_{\theta}(\cdot|x)}[r(x,y) s_{\theta}(x,y)]\|^2 \leq \mathbb{E}_{y \sim \pi_{\theta}(\cdot|x)}[\|s_{\theta}(x,y)\|^2] \leq G^2\,,
\end{equation}
where the last inequality follows from using Assumption~\ref{as:E-LS}. Using this inequality in \eqref{eq:proof-bound-atheta} yields the desired inequality: 
\begin{equation}
\label{eq:bound-agreement-score}
\forall x \in \mathcal{X}, \quad |a_{\theta}(x)| \leq G^2\,.
\end{equation}
Back to \eqref{eq:inner-prod-decomp} and using the inequalities established above, we conclude the proof: 
\begin{equation}
\bigl\langle \nabla J_k(\theta),\ \nabla J_1(\theta)\bigr\rangle \leq  - \delta(\theta)\,, 
\end{equation}
where $\delta(\theta) := m W_{-}(k,\theta) - G^2 W_{+}(k,\theta)$ and the aggregated weights~$W_{-}(k,\theta), W_{+}(k,\theta)$ are defined in \eqref{eq:aggregated-weights}. If in addition $\delta(\theta) < 0$, then $\langle \nabla J_k(\theta), \nabla J_1(\theta)\rangle \leq - \delta(\theta) < 0\,.$
\end{proof}

\subsection{Proof of Proposition~\ref{prop:phase-transition-in-k}: Influence of $k$}

\begin{proof}
Recall from Proposition~\ref{prop:grad-conflict} that: 
\begin{equation}
\bigl\langle \nabla J_k(\theta), \nabla J_1(\theta)\bigr\rangle
=
\mathbb{E}\!\left[\omega_{k,\theta}(x)\,a_\theta(x)\right]\,.
\end{equation}
Using again the same decomposition as in \eqref{eq:inner-prod-decomp}, we have: 
\begin{equation}
\label{eq:inner-prod-decomp2}
\bigl\langle \nabla J_k(\theta),\ \nabla J_1(\theta)\bigr\rangle
=
\mathbb{E}\!\left[\omega_{k,\theta}(x)\,a_\theta(x)\right]
=
\mathbb{E}\!\left[\omega_{k,\theta}(x)\,a_\theta(x)\,\mathds{1}_{\{x\in\mathcal{X}_{-}(\theta,m)\}}\right]
+
\mathbb{E}\!\left[\omega_{k,\theta}(x)\,a_\theta(x)\,\mathds{1}_{\{x\notin\mathcal{X}_{-}(\theta,m)\}}\right]\,.
\end{equation}
Depending on the prompt region, we have two distinct ways to bound the pass@$k$ weights:
\begin{enumerate}[label=(\roman*), leftmargin=0pt, itemindent=*, align=left]
\item For $x\in \mathcal{X}(m,\theta)$, we have by assumption, $p_\theta(x)\le \varepsilon$, hence $\omega_{k,\theta}(x)\ \ge\ k(1-\varepsilon)^{k-1}.$
\item For $x\notin \mathcal{X}(m,\theta)$ we have by assumption, $p_\theta(x)\ge \delta$, hence $\omega_{k,\theta}(x)\ \le\ k(1-\delta)^{k-1}.$
\end{enumerate}
Observing in addition to both items above that $a_\theta(x)\le -m$ for any $x \in \mathcal{X}_{-}(\theta,m)$ and $a_\theta(x)\le G^2$ for any $x \in \mathcal{X}$ (as shown in \eqref{eq:bound-agreement-score} under Assumption~\ref{as:E-LS}) and hence for any $x \notin \mathcal{X}_{-}(\theta,m)$ in particular,  we obtain from \eqref{eq:inner-prod-decomp2} the following inequality: 
\begin{equation}
\bigl\langle \nabla J_k(\theta), \nabla J_1(\theta)\bigr\rangle
\ \le\
-\,k(1-\varepsilon)^{k-1}\,m\,q
\ +\
k(1-\delta)^{k-1}\,G^2\,(1-q)\,,
\end{equation}
where we also use the assumption~$q_{\theta} > q\,.$
Thus it suffices to have
\[
k(1-\delta)^{k-1}G^2(1-q)\ <\ k(1-\varepsilon)^{k-1}m q,
\]
to ensure that $\bigl\langle \nabla J_k(\theta), \nabla J_1(\theta)\bigr\rangle < 0\,.$ The above condition is equivalent to: 
\[
\frac{G^2(1-q)}{m q}\ <\ \left(\frac{1-\varepsilon}{1-\delta}\right)^{k-1}.
\]
Taking $\log$ in the above inequality, recalling that $1-\varepsilon > 1-\delta$ and reordering the inequality yields: 
\begin{equation}
k\ > k^{\star} := \ 1\ +\ \frac{\log\!\Bigl(\dfrac{(1-q)\,G^{2}}{q\,m}\Bigr)}
{\log\!\Bigl(\dfrac{1-\varepsilon}{1-\delta}\Bigr)}\,, 
\end{equation}
which concludes the proof. 
\end{proof}

\subsection{Lemma~\ref{lem:smoothness-pass@k}: Smoothness of pass@$k$}
\label{appx:lemma-smoothness}

\begin{lemma}[Smoothness of pass@$k$]
\label{lem:smoothness-pass@k}
Under Assumption~\ref{as:E-LS}, for all $k \geq 1$, 
the pass@$k$ objective $J_k$ is $L_k$-smooth with $L_k := k^2 G^2 + kF,$ i.e. for all $\theta, \theta' \in \mathbb{R}^d,$
\begin{equation*}
|J_k(\theta') - J_k(\theta) - \langle \nabla J_k(\theta), \theta' - \theta \rangle| \leq \frac{L_k}{2}\|\theta' - \theta\|_2^2\,.
\end{equation*}
\end{lemma}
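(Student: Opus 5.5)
We will bound the Hessian of $J_k$ uniformly in operator norm by $L_k$. The two-sided inequality then follows from Taylor's theorem with integral remainder along the segment $\theta + t(\theta'-\theta)$. Since $J_k(\theta) = \mathbb{E}_{x\sim\mathcal{D}}[f_k(p_\theta(x))]$ with $f_k(p) = 1-(1-p)^k$, the chain rule gives, for each prompt,
\begin{equation*}
\nabla^2 f_k(p_\theta(x)) = f_k''(p_\theta(x))\, \nabla p_\theta(x)\nabla p_\theta(x)^\top + f_k'(p_\theta(x))\, \nabla^2 p_\theta(x),
\end{equation*}
with $f_k'(p) = k(1-p)^{k-1} \in [0,k]$ and $f_k''(p) = -k(k-1)(1-p)^{k-2}$, so $|f_k''(p)| \le k(k-1) \le k^2$ on $[0,1]$. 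The operator norm is then bounded by $k^2\|\nabla p_\theta(x)\|^2 + k\|\nabla^2 p_\theta(x)\|$. We interchange the gradient and the expectation over $x$, justified by dominated convergence under the uniform bounds below. Averaging over $x\sim\mathcal{D}$ reduces everything to per-prompt bounds on $\nabla p_\theta(x)$ and $\nabla^2 p_\theta(x)$.

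The first-order bound is already available. By \eqref{eq:pass@1-grad-bound} in the proof of Corollary~\ref{cor:dominating-neg-interfer-prompts}, Jensen's inequality and Assumption~\ref{as:E-LS} give $\|\nabla p_\theta(x)\|^2 \le G^2$. For the Hessian we differentiate \eqref{eq:pass@1pg} once more, using the log-derivative trick:
\begin{equation*}
\nabla^2 p_\theta(x) = \mathbb{E}_{y\sim\pi_\theta(\cdot|x)}\bigl[r(x,y)\bigl(s_\theta(x,y)s_\theta(x,y)^\top + \nabla^2\log\pi_\theta(y|x)\bigr)\bigr].
\end{equation*}
Since $r\in\{0,1\}$, the triangle inequality gives $\|\nabla^2 p_\theta(x)\| \le \mathbb{E}[\|s_\theta\|^2] + \mathbb{E}[\|\nabla^2\log\pi_\theta\|] \le G^2 + F$.

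Combining these bounds gives $\|\nabla^2 J_k(\theta)\| \le k(k-1)G^2 + k(G^2+F) = k^2G^2 + kF = L_k$. This matches the stated constant exactly, which suggests using the sharper $k(k-1)$ rather than $k^2$ for the curvature term of $f_k$.

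The main obstacle is rigor rather than computation. The interchange of differentiation with $\mathbb{E}_y$ and $\mathbb{E}_x$ needs regularity, which we would handle by assuming $\pi_\theta$ is twice differentiable and invoking dominated convergence, or by working with finite response sets as in LLMs. Also, Assumption~\ref{as:E-LS} is stated pointwise in $\theta$, so the Hessian bound holds at every point of the segment. The integral form $J_k(\theta')-J_k(\theta)-\langle\nabla J_k(\theta),\theta'-\theta\rangle = \int_0^1 (1-t)(\theta'-\theta)^\top\nabla^2 J_k(\theta+t(\theta'-\theta))(\theta'-\theta)\,dt$ then yields the claim with constant $L_k/2$.
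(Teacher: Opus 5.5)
Your proposal is correct and follows the paper's proof essentially step for step. Both use the chain-rule Hessian of $f_k(p_\theta(x))$, the per-prompt bounds $\|\nabla p_\theta(x)\|^2\le G^2$ and $\|\nabla^2 p_\theta(x)\|\le G^2+F$, and the weight bounds $|f_k'|\le k$, $|f_k''|\le k(k-1)$, giving $k(k-1)G^2+k(G^2+F)=k^2G^2+kF$; the sharper $k(k-1)$ is genuinely needed, since the cruder $k^2$ in your first paragraph would overshoot $L_k$ by $kG^2$. Your explicit integral-remainder step and regularity remarks just make precise what the paper leaves implicit when it says a bounded Hessian implies $L_k$-smoothness.
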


\begin{proof}
To show smoothness of the pass@$k$ objective $J_k$, we bound its Hessian $\nabla^2 J_k(\theta)$ uniformly in the policy parameter~$\theta$ using Assumption~\ref{as:E-LS}. 

Since
$\nabla J_k(\theta) = \mathbb{E}_{x \sim \mathcal{D}}[ w_{k}(p_{\theta}(x)) \nabla p_{\theta}(x)]$ as shown in \eqref{eq:pass@k-grad}, the chain rule gives the Hessian of $J_k$ for every $\theta \in \mathbb{R}^d$,  
\begin{equation}
\label{eq:hess-chain}
\nabla^2 J_k(\theta)
=
\mathbb{E}_{x\sim\mathcal D}\Big[
\omega_k'(p_{\theta}(x))\,\nabla p_{\theta}(x)\,\nabla p_{\theta}(x)^\top
+
\omega_k(p_{\theta}(x))\,\nabla^2 p_{\theta}(x)
\Big].
\end{equation}
Taking the operator norm and using the triangular inequality yields: 
\begin{equation}
\label{eq:hess-norm-split}
\big\|\nabla^2 J_k(\theta)\big\|
\le
\mathbb{E}_{x\sim\mathcal D}\Big[
\big|\omega_k'(p_{\theta}(x))\big| \cdot\,\|\nabla p_{\theta}(x)\|^2
+
\big|\omega_k(p_{\theta}(x))\big|\cdot\,\|\nabla^2 p_{\theta}(x)\|
\Big].
\end{equation}
In the rest of the proof, we bound each term separately. 

\paragraph{Step 1: Bounding $\|\nabla p_{\theta}(x)\|^2$.} 
Recall from \eqref{eq:pass@1pg} that the pass@1 policy gradient is given by:
\begin{equation}
\nabla p_{\theta}(x)
=
\mathbb{E}_{y\sim\pi_\theta(\cdot |x)}\!\Big[r(x,y)\,\nabla \log \pi_\theta(y|x)\Big].
\end{equation}
By Jensen's inequality and since $r(x,y)\in\{0,1\}$, we have
\begin{align}
\|\nabla p_{\theta}(x)\|^2
&\le
\mathbb{E}_{y\sim\pi_\theta(\cdot |x)}\Big[r(x,y)^2\,\|\nabla \log \pi_\theta(y|x)\|^2\Big]
\le
\mathbb{E}_{y\sim\pi_\theta(\cdot |x)}\Big[\|\nabla \log \pi_\theta(y|x)\|^2\Big]
\le G^2\,,
\label{eq:grad-e-bound}
\end{align}
where the last bound follows from Assumption~\ref{as:E-LS}. 

\paragraph{Step 2: Bounding $\|\nabla^2 p_{\theta}(x)\|$.}
Differentiating the pass@1 policy gradient, we obtain: 
\[
\nabla^2 p_{\theta}(x)
=
\mathbb{E}_{y\sim\pi_\theta(\cdot |x)}\!\Big[
r(x,y)\,\nabla \log \pi_\theta(y|x)\,\nabla \log \pi_\theta(y|x)^\top
+
r(x,y)\,\nabla^2 \log \pi_\theta(y|x)
\Big].
\]
Taking the operator norm and using the triangle inequality, we obtain: 
\begin{align}
\|\nabla^2 p_{\theta}(x)\|
&\le
\mathbb{E}_{y\sim\pi_\theta(\cdot|x)}\Big[
|r(x,y)| \cdot \,\|\nabla \log \pi_\theta(y|x)\|^2
\Big]
+
\mathbb{E}_{y\sim\pi_\theta(\cdot |x)}\Big[
|r(x,y)|\cdot \,\|\nabla^2 \log \pi_\theta(y|x)\|
\Big]
\nonumber\\
&\le
\mathbb{E}_{y\sim\pi_\theta(\cdot |x)}\Big[\|\nabla \log \pi_\theta(y|x)\|^2\Big]
+
\mathbb{E}_{y\sim\pi_\theta(\cdot |x)}\Big[\|\nabla^2 \log \pi_\theta(y|x)\|\Big]
\le G^2 + F\,,
\label{eq:hess-e-bound}
\end{align}
where the last inequality again follows from using Assumption~\ref{as:E-LS}. 

\paragraph{Step 3: Bounding weights $\omega_k'$ and $\omega_k$.} By definition of the weights $w_k(p) = f_k'(p) = k(1-p)^{k-1}$ for any $p\in[0,1]$. Therefore it follows that $\omega_k'(p)=-k(k-1)(1-p)^{k-2}$ for $k \geq 2$. Hence, for any $p \in [0,1]$ and any $k \geq 2$,
\begin{equation}
\label{eq:weights-bounds}
|\omega_k(p)| \le k\,, \quad |\omega_k'(p)| \le k(k-1)\,. 
\end{equation}

\paragraph{Conclusion.}
Plugging \eqref{eq:grad-e-bound}--\eqref{eq:hess-e-bound} and the bounds on $\omega_k,\omega_k'$ \eqref{eq:weights-bounds} into \eqref{eq:hess-norm-split} yields
\[
\|\nabla^2 J_k(\theta)\|
\le
k(k-1)\,G^2 + k\,(G^2+F)
\le k^2 G^2 + kF.
\]
Finally, boundedness of the Hessian norm implies that $J_k$ is
$L_k$-smooth with $L_k=k^2G^2+kF$, i.e., for any $\theta, \theta' \in \mathbb{R}^d\,,$
\[
\Big|J_k(\theta')-J_k(\theta)-\langle \nabla J_k(\theta),\theta'-\theta\rangle\Big|
\le \frac{L_k}{2}\|\theta'-\theta\|^2.
\]
\end{proof}

\subsection{Proof of Proposition~\ref{prop:pass@1-degradation}: Pass@$1$ degradation under pass@$k$ updates}

\begin{proof}
We prove the decrease of pass@1 and increase of pass@$k$ under pass@$k$ policy gradient ascent separately. Recall the policy update rule with step size $\eta > 0$: 
\begin{equation}
\label{eq:pass@k-gd}
\theta^{+} = \theta + \eta\,\nabla J_k(\theta).
\end{equation}

\noindent\textbf{(i) Pass@1 descent.}
By $L_1$-smoothness of $J_1$ (Lemma~\ref{lem:smoothness-pass@k}) and using the policy update rule~\eqref{eq:pass@k-gd}, we have: 
\begin{align}
J_1(\theta^{+})
&\le J_1(\theta) + \langle \nabla J_1(\theta),\theta^{+}-\theta\rangle
    + \frac{L_1}{2}\|\theta^{+}-\theta\|^2 \nonumber\\
&= J_1(\theta) + \eta\,\langle \nabla J_1(\theta),\nabla J_k(\theta)\rangle
    + \frac{L_1\eta^2}{2}\|\nabla J_k(\theta)\|^2 .
\label{eq:smooth-J1}
\end{align}

We now bound each one of the last two terms in the above inequality.

\noindent\textbf{Bounding $\|\nabla J_k(\theta)\|^2$.}
Recall the pass@$k$ policy gradient expression $\nabla J_k(\theta)
= \mathbb{E}_{x}\!\left[\omega_\theta(x)\,\nabla J_1(x;\theta)\right]$, we have by Jensen and Cauchy-Schwarz inequalities: 
\begin{equation}
\|\nabla J_k(\theta)\|^2
= \left\|\mathbb{E}_{x}\!\left[\omega_\theta(x)\,\nabla J_1(x;\theta)\right]\right\|^2
\;\le\;
\mathbb{E}_{x}\!\left[|\omega_\theta(x)|^2\,\|\nabla J_1(x;\theta)\|^2\right] \leq k^2 G^2\,,
\label{eq:grad-bound-1}
\end{equation}
where the last inequality follows from using the pass@1 policy gradient bound in \eqref{eq:pass@1-grad-bound} together with immediate weight bound $|\omega_\theta(x)|\le k$ and the fact that the reward is binary (this can be generalized to a continuous uniformly bounded reward function). 

\noindent\textbf{Bounding $\langle \nabla J_1(\theta),\nabla J_k(\theta)\rangle$.} 
By Corollary~\ref{cor:dominating-neg-interfer-prompts}, we have 
\begin{equation}
\label{eq:innerprod-bound}
\langle \nabla J_1(\theta),\nabla J_k(\theta)\rangle \leq - \delta(\theta)\,.
\end{equation}
Combining the bounds~\eqref{eq:grad-bound-1} and \eqref{eq:innerprod-bound}, we immediately obtain from \eqref{eq:smooth-J1}: 
\begin{equation}
J_1(\theta^{+})
\leq J_1(\theta) -  \eta \delta(\theta) + C_2 \eta^2\,,
\end{equation}
where $C_2 := \frac{L_1 k^2 G^2}{2}$ and $L_1 = G^2 + F$ as shown in Lemma~\ref{lem:smoothness-pass@k}, $G, F$ being smoothness constants defined in Assumption~\ref{as:E-LS}. 

Hence, if $\delta(\theta) > 0$ and $0 < \eta \le \eta_+ := \frac{\delta(\theta)}{C_2},$ then $-\eta \delta(\theta) + C_2\eta^2<0$ and it follows that 
\begin{equation}
J_1(\theta^{+})
\leq J_1(\theta) -  \eta \delta(\theta) + C_2 \eta^2 < J_1(\theta)\,,
\end{equation}
which is the first desired inequality. 

\noindent\textbf{(ii) Pass@$k$ ascent.} As for pass@$k$ under pass@$k$ policy gradient ascent, the proof is a standard application of smoothness in optimization. We provide a brief proof in our setting for completeness. 

Using $L_k$-smoothness of the pass@$k$ objective shown in Lemma~\ref{lem:smoothness-pass@k} together with the policy update rule~\eqref{eq:pass@k-gd}, we have: 
\begin{align}
J_k(\theta^+)
&\ge J_k(\theta) + \langle \nabla J_k(\theta),\theta^+-\theta\rangle
     -\frac{L_k}{2}\|\theta^+-\theta\|^2 \nonumber\\
&= J_k(\theta) + \eta\|\nabla J_k(\theta)\|^2
     -\frac{L_k\eta^2}{2}\|\nabla J_k(\theta)\|^2 \nonumber\\
&= J_k(\theta) + \eta\Bigl(1-\frac{L_k\eta}{2}\Bigr)\|\nabla J_k(\theta)\|^2.
\label{eq:smooth-Jk-lower}
\end{align}
We pick then $\eta \leq \frac{1}{L_k}$ to obtain the second desired inequality: 
\begin{equation}
J_k(\theta^+) \geq  J_k(\theta) + \frac{\eta}{2} \|\nabla J_k(\theta)\|^2 \geq  J_k(\theta)\,.
\end{equation}
Overall, a sufficient step-size condition to guarantee simultaneous increase of pass@$k$ and decrease of pass@1 (i.e., in particular $J_1(\theta^+)<J_1(\theta)$ and $J_k(\theta^+)>J_k(\theta)$ when $\nabla J_k(\theta) \neq 0$) is:  
\begin{equation}
0 < \eta \leq \min\left(\frac{\delta(\theta)}{C_2}, \frac{1}{L_k}\right)\,,
\end{equation}
and this concludes the proof. 
\end{proof}

\newpage
\section{Additional Experiments}

\subsection{More details regarding experimental setting}

\noindent\textbf{Sampling configuration.} For each problem, we generate $k=32$ independent responses using temperature sampling with temperature $T=0.7$ and nucleus sampling with $p=0.95$. Responses are evaluated using exact match against ground truth answers and binary rewards indicate correctness.

\subsection{Threshold robustness analysis}
\label{app:threshold_robustness}

Figure~\ref{fig:all_threshold_configs_llama} shows gradient misalignment across all 6 additional threshold configurations for DeepSeek-R1-Distill-Llama-8B (beyond the main text example of $\delta_1=0.85$, $\delta_2=0.10$). All combinations exhibit negative inner products (range: $-0.49$ to $-0.65$), demonstrating that the misalignment phenomenon is robust to threshold choices.

\begin{figure}[p]
  \centering
  \begin{subfigure}{0.9\textwidth}
    \centering
    \includegraphics[width=\textwidth]{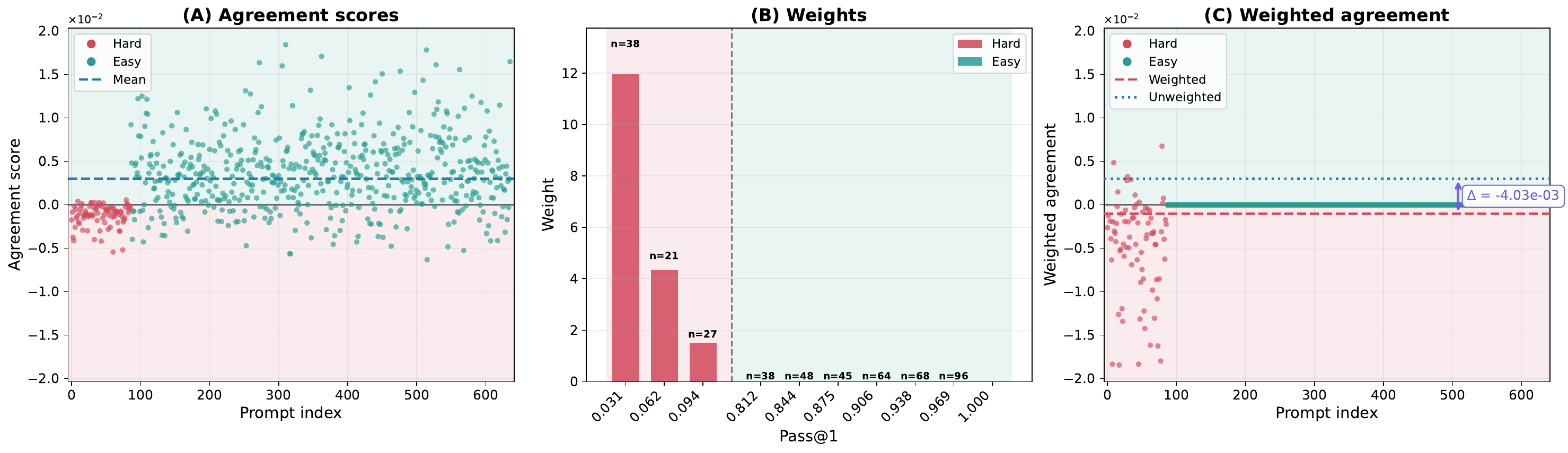}
    \caption{$\delta_1=0.80$, $\delta_2=0.10$ (636 prompts: 86 hard, 550 easy, IP=$-0.652$)}
  \end{subfigure}
  
  \vspace{0.5em}
  
  \begin{subfigure}{0.9\textwidth}
    \centering
    \includegraphics[width=\textwidth]{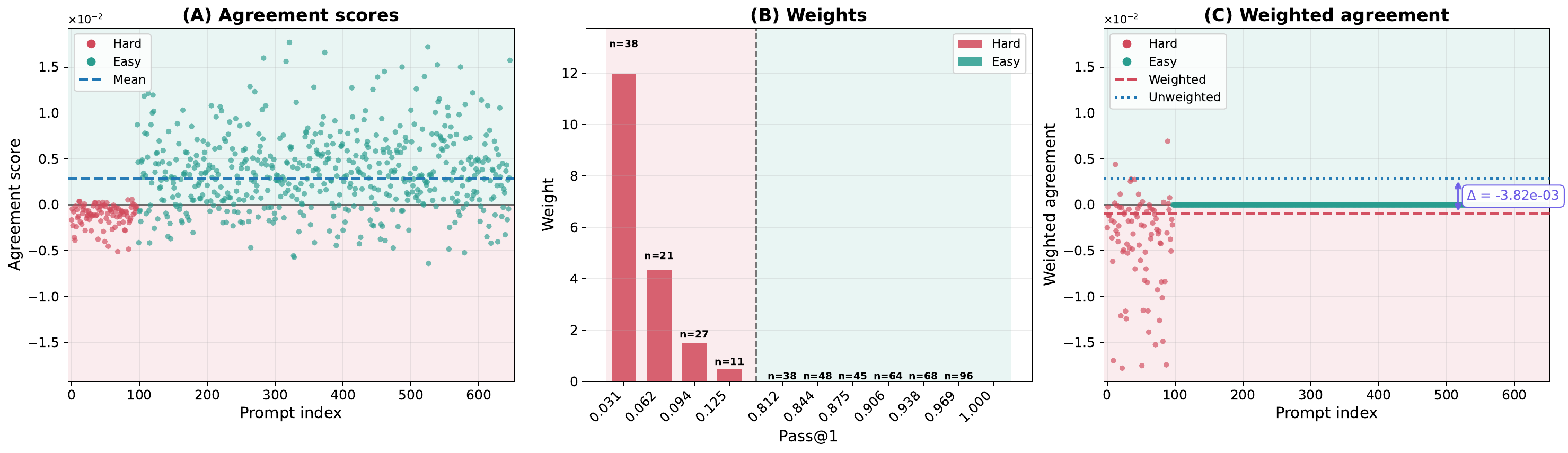}
    \caption{$\delta_1=0.80$, $\delta_2=0.15$ (647 prompts: 97 hard, 550 easy, IP=$-0.621$)}
  \end{subfigure}
  
  \vspace{0.5em}
  
  \begin{subfigure}{0.85\textwidth}
    \centering
    \includegraphics[width=\textwidth]{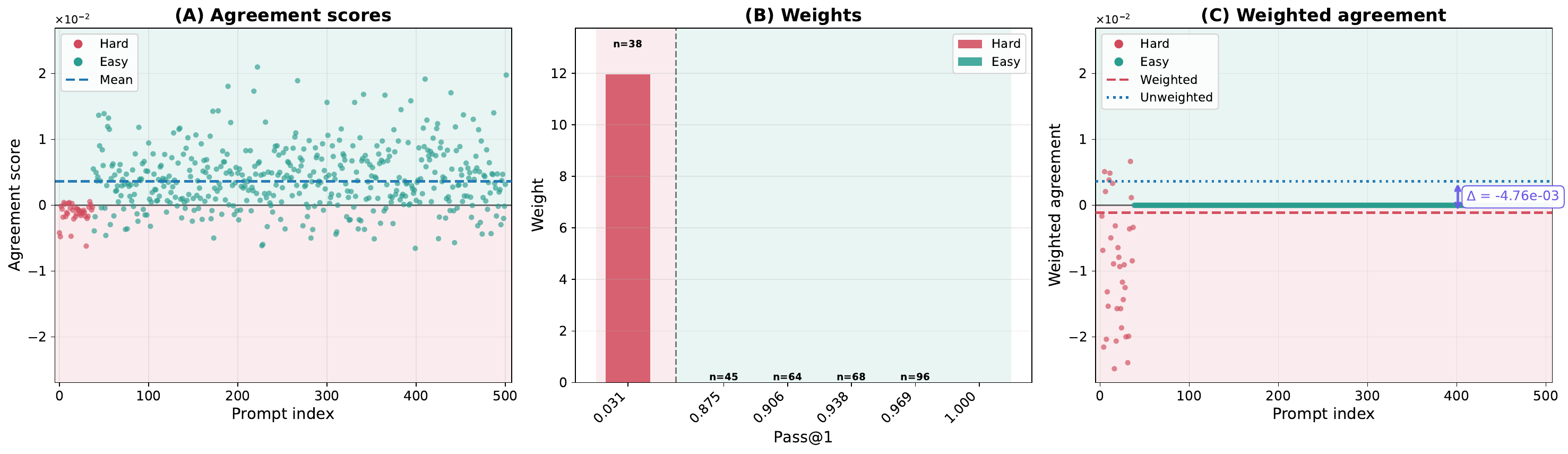}
    \caption{$\delta_1=0.85$, $\delta_2=0.05$ (502 prompts: 38 hard, 464 easy, IP=$-0.551$)}
  \end{subfigure}
  
  \vspace{0.5em}
  
  \begin{subfigure}{0.85\textwidth}
    \centering
    \includegraphics[width=\textwidth]{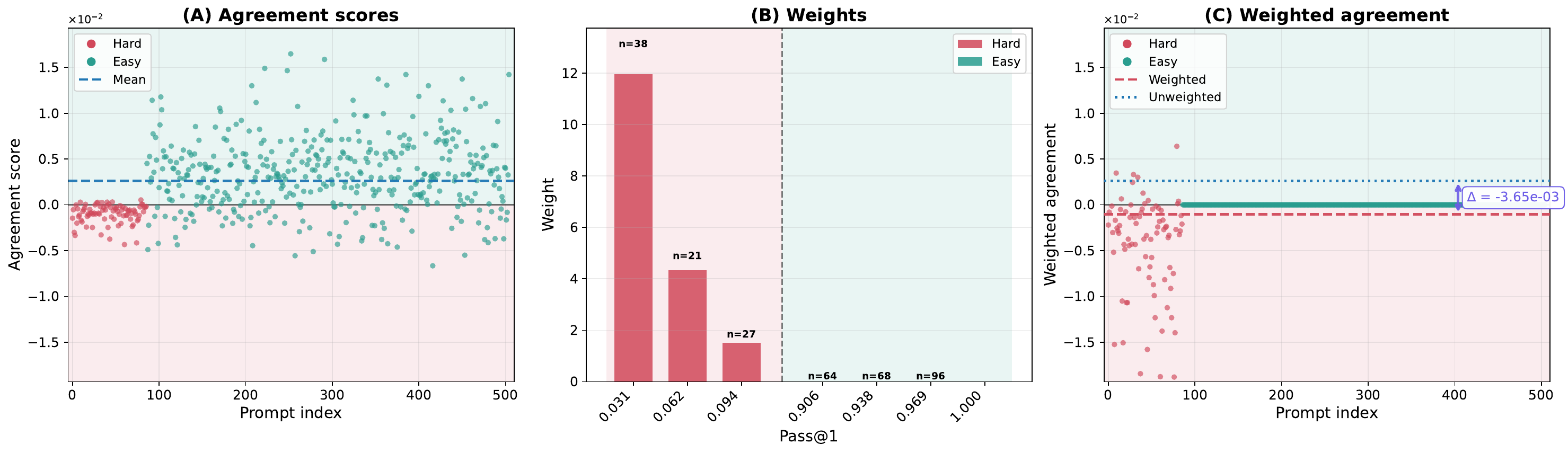}
    \caption{$\delta_1=0.90$, $\delta_2=0.10$ (505 prompts: 86 hard, 419 easy, IP=$-0.527$)}
  \end{subfigure}
  
 \caption{
  \textbf{Gradient conflict across all threshold configurations (DeepSeek-R1-Distill-Llama-8B).}
  Each panel shows agreement scores, weights, and weighted contributions for a different $(\delta_1, \delta_2)$ configuration. All exhibit negative inner products, confirming robust misalignment across thresholds. The consistent downward shift from unweighted to weighted means shows pass@$k$ optimization systematically opposes pass@1 improvement.
}
  \label{fig:all_threshold_configs_llama}
\end{figure}

\end{document}